\newcommand{\Bayes}{{\mbox{\scriptsize{Bayes}}}}
\newcommand{\old}{{\mbox{\scriptsize{old}}}}
\newcommand{\sumi}{\sum_{i=1}^{n}}
\newcommand{\sumj}{\sum_{j=1}^{n}}
\newcommand{\T}{\top}
\newcommand{\bone}{\mathds{1}}
\def\argmin{\mathop{\rm argmin}}
\newcommand{\bH}{\mathbf{H}}
\newcommand{\bK}{\mathbf{K}}
\newcommand{\bg}{\mathbf{g}}
\newcommand{\bx}{\mathbf{x}}
\newcommand{\bW}{\mathbf{W}}
\newcommand{\bX}{\mathbf{X}}
\newcommand{\bk}{\mathbf{k}}
\newcommand{\bu}{\mathbf{u}}
\newcommand{\bbeta}{\boldsymbol{\beta}}
\newcommand{\balpha}{\boldsymbol{\alpha}}
\newcommand{\calF}{\mathcal{F}}
\newcommand{\calH}{\mathcal{H}}
\newcommand{\calE}{\mathcal{E}}
\newcommand{\calR}{\mathcal{R}}
\newcommand{\calX}{\mathcal{X}}
\newcommand{\Rho}{\mathrm{P}}
\def\argmin{\mathop{\rm argmin}}
\theoremstyle{definition}
\theoremstyle{remark}
\theoremstyle{definition}
\newtheorem{theorem}{Theorem}
\newtheorem{lemma}[theorem]{Lemma}
\newtheorem{corollary}[theorem]{Corollary}
\newtheorem{definition}{Definition}
\title{A Gradient-Based Variable Selection for Binary Classification in Reproducing Kernel Hilbert Space}
\author{Jongkyeong Kang \\
	Department of Statistics\\
	Seoul National University\\
	08826, Seoul, Republic of Korea\\
	\texttt{coolnessjk@snu.ac.kr} \\
	\And
	Seung Jun Shin \\
	Department of Statistics\\
	Korea University\\
	02841, Seoul, Republic of Korea\\
	\texttt{sjshin@korea.ac.kr} \\
}
\date{}
\begin{document}
	\maketitle
	
	\begin{abstract}
Variable selection is essential in high-dimensional data analysis. Although various variable selection methods have been developed, most rely on the linear model assumption. This article proposes a nonparametric variable selection method for the large-margin classifier defined by reproducing the kernel Hilbert space (RKHS). we propose a gradient-based representation of the large-margin classifier and then regularize the gradient functions by the group-lasso penalty to obtain sparse gradients that naturally lead to the variable selection. The groupwise-majorization-decent \citep[GMD,][]{yang2015fast} algorithm is proposed to efficiently solve the proposed problem with a large number of parameters. We employ the strong sequential rule \citep{tibshirani2012strong} to facilitate the tuning procedure. The selection consistency of the proposed method is established by obtaining the risk bound of the estimated classifier and its gradient. Finally, we demonstrate the promising performance of the proposed method through simulations and real data illustration.
	\end{abstract}


\keywords{Gradient learning \and Large-margin classifier \and Nonparametric Variable selection \and Reproducing kernel Hilbert space}

\section{Introduction}
Variable selection is an essential task to optimize prediction performance and interpretability of the model and has a long history, especially for linear regression. Sequential methods such as forward addition and backward elimination still are popular choices in practice. The best subset selection is desirable in theory but may not be applicable in practice unless the number of predictors is restrictively small. 

After the seminal LASSO \citep{tibshirani1997lasso} proposed, the penalization that yields a sparse estimator has gained great popularity in the variable selection for the linear model. In addition to the canonical $L_1$-penalty, various penalties that lead to sparse estimators have developed. Popular examples include, but are not limited to, adaptive LASSO \citep{zou2006adaptive}, elastic net \citep{zou2005regularization}, group LASSO \citep{yuan2006model}, and SCAD \citep{Fan::2001scad} penalties. The penalized variable selection can readily be extended to the linear classifier since both regression and classification can be viewed as an empirical risk minimization (ERM) problem under which penalization is natural. We refer to \citet{hastie2015statistical} and \citet{buhlmann2011statistics} and references therein for a comprehensive overview of penalized variable selection for both regression and classification.

In this article, we study the variable selection for nonlinear binary classification, which has become rapidly popular in contemporary applications in statistics and data science. Although the penalized variable selection is known to be promising in both theory and practice, its success is heavily based on the linear model assumption, where a sparse coefficient vector directly leads to variable selection. However, the variable selection for the nonlinear classifier may not be straightforward and it has not been explored extensively as for the linear model. \citet{lin2006component} proposed the COSSO, a natural extension of LASSO to the smoothing-spline ANOVA model \citep{gu2013smoothing}, a well-established nonlinear model. \citet{dasgupta2019feature} proposed a sequential variable selection algorithm for the nonlinear models defined in reproducing kernel Hilbert space \citep[RKHS,][]{Wahba:1999RKHS}. Finally, motivated by attenuation in the measurement error model, \citet{stefanski2014variable} developed a measurement error model-based approach to variable selection for nonparametric classification.

In the regression context, \cite{yang2016model} proposed a novel variable selection method based on learning gradients that contain meaningful information for variable selection. As the regression coefficient is a gradient vector of the linear regression model, the sparsity of the gradient vector naturally leads to the selection of variables even for nonlinear models.  Motivated by \cite{mukherjee2006estimation} and \citet{ye2012learning}, \cite{yang2016model} reformulate the regression problem to gradient-based ERM problem in RKHS and then employ the group-LASSO type penalty to pursue sparse gradient vector to achieve the variable selection. \citet{he2018gradient} extended this gradient-based idea to variable selection for nonparametric quantile regression.

Although the gradient-based approach for variable selection is based on the ERM framework, its extension to classification is not a straightforward extension. The gradient-based formulation takes advantage of the first-order Taylor expansion. In the regression with the continuous response, the constant term can be directly approximated by the response, while it must be estimated in binary classification. That is, we need to estimate both the classifier and its gradient simultaneously, which causes technical bottlenecks in theory and computation. \citet{he2020variable} employed the derivative reproducing kernel \citep{zhou2008derivative} to extend the gradient-based variable selection to the classification. The proposed method is computationally much more efficient than the original formulation proposed by \citet{yang2016model}. Yet, \citet{zhou2008derivative} only showed a sure screening property and failed to prove the selection consistency, the minimal condition that any reasonable variable selection methods must satisfy.

In this article, we extend the gradient-based variable selection \citep{yang2016model} to propose a consistent variable selection for nonlinear classification in RKHS. To avoid the heavy computation involving a large number of parameters, we apply the groupwise-majorization-decent \citep[GMD,][]{yang2015fast} algorithm. Furthermore, we modified the strong sequential rule \citep{tibshirani2012strong} to facilitate the selection of tuning parameters.

The rest of the paper is organized as follows. In Section \ref{sec2}, we present our proposed variable selection method for binary classification in RKHS. Section \ref{sec3} is devoted to describing how both the groupwise majorization-decent algorithm and the strong sequential rule are applied to solve the proposed method with a large number of parameters. In Section \ref{sec4}, we study the asymptotic properties of the proposed method to establish the selection consistency. Numerical experiments and real data applications are presented in Sections \ref{sec5} and \ref{sec6}, respectively. Concluding remarks are given in Section \ref{sec7}. Finally, the technical proofs are relegated to Appendix. 

\section{Gradient learning for classification}\label{sec2}

\subsection{Large-margin Classifier}

Let $\bX \in \mathbb{R}^p$ and $Y = \{-1, 1\}$ be a $p$-dimensional predictor and a binary response random variable drawn from $\Rho(\bx,y)$ the joint distribution of $(\bX,Y)$. The objective of the classification is to learn a classifier $f$ that minimizes the population misclassification error rate, $P\{Y f(\bX) \le 0 \}$. We call such a classification function Bayes optimal classifier, denoted by $f^{\Bayes}$ which yields an optimal classification rule for a given $\bx$. To be more precise, the Bayes classifier $f^*$ is defined by a solution of the following optimization problem:
\begin{align} \label{eq:loss0}
	f^\Bayes = \argmin_{f\in \calH} \int \bone_{\{x \le 0\}}\left(y f(\bx) \right)d \Rho(\bx, y),
\end{align}
where $\calH$ is the Hilbert space where a function $f$ resides, and $\bone_A(m)$, often referred to as the zero-one loss, denotes an indicator function that takes 1 if $m \in A$ and 0 otherwise. Despite its optimality, it is not easy to solve \eqref{eq:loss0} directly due to the irregularity of the zero-one loss function.

To circumvent this, one considers a convex relaxation of zero-one loss, denoted with $L: \mathbb{R} \rightarrow \mathbb{R}_{+} \cup\{0\}  $ as follows:
\begin{align} \label{eq:loss}
	f^* = \argmin_{f \in {\calH}} \int L(yf(\bx))d\Rho(\bx,y)
\end{align}

Under mild conditions \citep{vapnik1998support,bartlett2005local}, $f^*$ yields an identical classification rule to $f^{\Bayes}$, i.e., $\text{sign}(f^*(\bx)) = \text{sign}(f^\Bayes(\bx))$. If this holds, we say that $L$ is Fisher consistent (or classification calibrated). A popular choice of $L$ includes hinge loss function $L(m) = [1 - m]_+$ for the support vector machine where $[a]_+ = \max \{0, a\}$, logistic loss function $L(m) = \log\{1 + \exp(-m) \}$ for the logistic regression, and the exponential loss function $L(m) = \exp(-m)$ for the additive boosting, to name a few. 
Suppose that we are given a random sample of size $n$, $\{(\bx_i,y)\}_{i=1}^{n}$, drawn from $\Rho(\bx,y)$. A sample version of \eqref{eq:loss}, along with an additional regularization of $f$, is.
\begin{equation} \label{eq:large.margin}
	\hat f = \argmin _{f \in \calH}\frac{1}{n} \sum_{i=1}^{n} L\left(y_{i} f\left(\bx_{i}\right)\right) + \lambda J (f),
\end{equation}
where $J$ denotes a penalty functional that measures the roughness of $f$, and $\lambda> 0$ is a tuning parameter that controls the balance between the data fitting and the complexity of the classifier. We call \eqref{eq:large.margin} the large-margin classifier, the most popular binary classification tool in the statistical learning community.

\subsection{Gradient-Based Formulation}

Assume that a gradient vector of $f^*$, denoted by $\nabla f^{*}=({\partial f^{*}}/{\partial x_{1}}, \ldots, {\partial f^{*}}/{\partial x_{p}})^\T$ exist.
Consider the first order Taylor expansion of $f^*$ at around $\bx = \bu$:
\begin{align} \label{appx}
	f^*(\bx) \approx f^*(\bu)+\nabla f^*(\bu)^\T(\bx-\bu).
\end{align}
By \eqref{appx}, one can approximate the objective function of \eqref{eq:loss} as follows:
\begin{equation}\label{eq:def1}
	\hat \calE(f, \bg)=\frac{1}{n^{2}} \sum_{i, j=1}^{n} \omega_{s}(\bx_{i}-\bx_{j}) L\left(y_{i}\left(f\left(\bx_{j}\right)+\bg\left(\bx_{j}\right)^\T \left(\bx_{i}-\bx_{j}\right)\right)\right)
\end{equation}
for some functions $f : \calX \to \mathbb{R}$ and $\bg : \calX \to \mathbb{R}^p$ corresponding to $f^*$ and $\nabla f^*$, respectively, with $\calX$ being the support of $\bX$. Here 
$\omega_{s}$ denotes a smoothing kernel function with a bandwidth parameter $s$. In this paper, we focus on a Gaussian kernel $\omega_{s}(\bx-\bu)=s^{-(p+2)}\exp\left\{-\frac{1}{2s^2}\|\bx-\bu\|^2\right\}$ due to its popularity and computational simplicity.

Plugging \eqref{eq:def1} into \eqref{eq:large.margin}, we have the gradient-based form of the large-margin classifier as follows.
\begin{align} \label{eq1}
	\min_{(f, \bg) \in \calH_K^{p+1}}\hat \calE(f, \bg) +  \lambda  \{J_0(f)+ J_1(\bg)\},
\end{align}
where $J_0$ and $J_1$ denote the penalty functionals that measure the complexity of $f$ and $\bg$, respectively.

Toward the variable selection, we need to design $J_1$ carefully to have a sparse graidnet vector. Given $\{\bx_i, y_i\}_{i=1}^n$, we assume that each component of $\bg = (g_1, \cdots, g_p)^\T$ as well as $f$ live on the reproducing kernel Hilbert space (RKHS) $\calH_{K}$  generated by a positive definite kernel $K(\cdot,\cdot)$. By \textit{Representer Theorem} \citep{kimeldorf1971some}, we have the following finite-dimensional representation for both $f$ and $\bg$: 
\begin{align*}
	f(\bx)= \sum_{i=1}^{n}{\alpha}_{i0}K(\bx,\bx_i) \quad \mbox{and} \quad
	g_\ell(\bx)= \sum_{i=1}^{n} \alpha_{i\ell} K(\bx, \bx_i), ~ \ell = 1, \cdots, p.
\end{align*}
This yields
\begin{align} \label{eq2}
	f(\bx_j) + \bg(\bx_j)^\T (\bx_i - \bx_j) 
	&= \balpha_0^\T\bk_j+\sum_{\ell=1}^{p}\balpha_{\ell}^{\T} \bk_{j}\delta_{ij\ell},
\end{align}
where $\delta_{ij\ell}= x_{i\ell}-x_{j\ell}$ for $\ell=1,\cdots ,p$, and $\bk_j$ is the $j$-th column of the kernel matrix $\bK$. It is clear that $g_\ell(\bx)\equiv0$ if and only if $\balpha_{\ell}\equiv0$ or equivalently $\|\balpha_\ell\|_2=0$. This leads us to consider 
a group-Lasso type penalty for $J_1(\bg) =\sum_{\ell=1}^{p}\theta_\ell
J(g_\ell)$ with
\begin{align} \label{glpen}
	J(g_\ell)=\inf \left\{\|\balpha_{\ell}\|_2 : g_\ell(\cdot)=\sum_{i=1}^{n}\alpha_{i\ell}K(\cdot,\bx_i)	\right\}.
\end{align}
Here, $\theta_\ell > 0$ adaptively controls the contribution of the $\ell$th gradient as done by the adaptive LASSO \citep{zou2006adaptive}. 
The infimum in \eqref{glpen} is required because the representation of $g_\ell \in \calH_{K}$ may not be unique. %
We remark that this coefficient-based group lasso type of penalty $\|\balpha_{\ell}\|_2$ achieves 
the variable selection by forcing all $\alpha_{i\ell}$'s to exactly zero simultaneously, and the penalty \eqref{glpen} does not depend on the choice of the kernel $K$. In terms of variable selection, the choice of $J_0$ is not essential, and  
we employ a conventional ridge type penalty for $J_0(f)=\frac{\theta_0}{2} \| \balpha_{0} \|_2^2 $. 

Putting \eqref{eq2} into \eqref{eq1} with the aforementioned choice of $J_0$ and $J_1$, we finally propose to solve the following problem of variable selection in binary classification defined in RKHS. 
\begin{align} \label{eq:gradient}
	(\hat{\balpha}_{0},\cdots, \hat{\balpha}_{\ell})=\argmin_{({\balpha}_{0},\cdots, {\balpha}_{\ell}) }&
	\frac{1}{n^2}	\sumi \sumj  \omega_{s}(\bx_{i}-\bx_{j})L\{y_i \sum_{\ell=0}^{p}\balpha_{\ell}^{\T} \bk_{j}\delta_{ij\ell}\}  
	+	\lambda\left(\frac{\theta_0}{2}\|\balpha_{0}\|_2^2 +   \sum_{\ell=1}^{p}\theta_\ell\|\balpha_{\ell}\|_2\right).
\end{align}
Although any Fisher consistent $L$ can used for \eqref{eq:gradient}, in this article, we focus on two popular loss functions: the logistic loss $L(m) = \log ( 1 + \exp(-m))$ and the squared hinge loss $L(m) = [1-m]_+^2$ for technical simplicity. See Section \ref{sec7} for related discussions of the extension to other popular loss functions such as hinge loss.

\section{Computing algorithm} \label{sec3}
\subsection{GMD algorithm}
Let $\balpha=(\balpha_{0}^\T,\cdots,\balpha_{p}^\T)^\T$ and  $\tilde{\bK}_{ij}=(\delta_{ij0}\bk_{j}^\T,\cdots, \delta_{ijp}\bk_{j}^\T)^\T$. Then we define $L(\balpha)$ to denote the loss term in \eqref{eq:gradient} as follows. 
\begin{align} \label{unpen}
	L(\balpha)= \frac{1}{n^2} \sumi\sumj \omega_{s}(\bx_{i}-\bx_{j})L(y_i\tilde{\bK}_{ij}^\T{\balpha}).
\end{align}

We implement the groupwise-majorization-descent (GMD) algorithm \citep{yang2015fast} to solve \eqref{eq:gradient}. We remark that the GMD algorithm works only when the loss function satisfies the quadratic majorization (QM) condition. 
\begin{definition}
	A differentiable loss function $L$ is said to satisfy the quadratic majorization (QM) condition, if and only if 
	there exists a $n(p+1)\times n(p+1)$ matrix $\bH$ such that 
	\begin{align} \label{eq:qm}
		L(\balpha) \le L(\balpha^{\prime})+(\balpha-\balpha^\prime)\nabla L(\balpha^\prime)+\frac{1}{2}(\balpha-\balpha^\prime)^\T\bH(\balpha-\balpha^\prime), \qquad \forall \balpha, \balpha^\prime.
	\end{align}
\end{definition}
Note that the GMD algorithm is essentially a majorize-minimization (MM) algorithm \citep{hunter2004tutorial}, and the QM condition ensures that the empirical loss $L(\balpha)$ is majorized by the quadratic function given in \eqref{eq:qm}.

Let $\bW$ be an $n^2$-dimensional diagonal matrix whose $n(i-1)+j$th diagonal element is $\omega_{s}(\bx_{i}-\bx_{j})$, and $\tilde \bK$ be an $n^2 \times n(p+1)$ matrix whose $n(i-1)+j$th row is $\tilde \bK_{ij}$, for $i, j = 1, \cdots, n$. It turns out by Lemma 1 of \citet{yang2015fast} that both the logistic loss and squared hinge loss satisfy the QM condition with
\begin{align} \label{eq:H}
	\bH = \frac{C}{n^2} \tilde\bK^\T \bW \tilde \bK,
\end{align}
and $C = 1/4$ for logistic loss and $C = 1$ for squared hinge loss. See \citet{yang2015fast} for complete details about the GMD algorithm.

Let ${\balpha}^{\old}$ denote the current value of $\balpha$. In order to update $\balpha_{\ell}$, let us set $\balpha$ to be identical to $\balpha^\old$ except for the $\ell$th column, $\balpha_{\ell}$ which is regarded as unknown parameter to be updated. 
By \eqref{eq:qm}, we have 
\begin{align}\label{logeq1}
	L(\balpha) \le L({\balpha}^\old) &+ (\balpha_{\ell} - \balpha^\old_{\ell})^\T  \nabla L(\balpha^\old)_{\ell} + \frac{1}{2}(\balpha_{\ell}-\balpha^\old_{\ell})^\T\bH_{\ell}(\balpha_{\ell}-\balpha^\old_{\ell}), ~ \ell = 0, 1, \cdots, p,
\end{align}
where $\nabla L(\balpha)_{\ell}$ is the $\ell$-th block component of $\nabla L(\balpha)$ and $\bH_{\ell}$ is the $\ell$-th block diagonal matrix of $\bH$ given in \eqref{eq:H}. 
We can easily show that the $(a,b)$-th component of $\bH_{\ell}$ is
$\sum_{i=1}^n\sum_{j=1}^n  \omega_{s}(\bx_{i}-\bx_{j})\delta_{ij\ell}^2 K_{aj}K_{bj}$. 
Let $\eta_\ell$ be the largest eigenvalue of $\bH_{\ell}$, then we have
\begin{align}\label{logeq2}
	L(\balpha)\le L({\balpha}^\old) +(\balpha_{\ell}-\balpha^\old_{\ell})^\T \nabla L(\balpha^\old)_{\ell}+\frac{\eta_\ell}{2}(\balpha_{\ell}-\balpha^\old_{\ell})^\T(\balpha_{\ell}-\balpha^\old_{\ell}). 
\end{align}
Finally, GMD algorithm to solve \eqref{eq:gradient} iteratively updates $\balpha_{\ell}$ by solving, for $\ell = 0$
\begin{align} \label{logeq3}
	\argmin_{\balpha_{0}} L({\balpha}^\old) +(\balpha_{0}-\balpha_{0}^\old)^\T \nabla L(\balpha^\old)_{0} +\frac{\eta_0}{2}(\balpha_{0}-\balpha_{0}^\old)^\T(\balpha_{0}-\balpha_{0}^\old) +\frac{\lambda\theta_{0}}{2}\balpha_{0}^{\T}\balpha_{0},
\end{align}
and, for $\ell = 1, \cdots, p$
\begin{align} \label{logeq4}
	\argmin_{\balpha_{\ell}} L({\balpha}^\old) +(\balpha_{\ell}-\balpha_{\ell}^\old)^\T \nabla L(\balpha^\old)_{\ell}&+\frac{\eta_\ell}{2}(\balpha_{\ell}-\balpha_{\ell}^\old)^\T(\balpha_{\ell}-\balpha_{\ell}^\old) + \lambda\theta_{\ell}\|\balpha_{\ell}\|_2.
\end{align}
The computational efficiency of the proposed GMD algorithm for \eqref{eq:gradient} comes from the fact that both \eqref{logeq3} and \eqref{logeq4} have a closed-form solution as follows:
\begin{align} 
	\hat{\balpha}_{0} & =(\eta_0+\lambda\theta_0)^{-1}(\eta_0{\balpha}_{0}^\old-\nabla L({\balpha}^\old)_{0}),\label{logsol1} \\
	\hat{\balpha}_{\ell} & =\frac{1}{\eta_\ell}\left(-\nabla L(\balpha^\old)_{\ell}+\eta_\ell\balpha_{\ell}^\old\right)
	\left(1-\frac{\lambda\theta_{\ell}}{\|-\nabla L(\balpha^\old)_{\ell}+\eta_\ell\balpha_{\ell}^\old\|_2}\right)_+, ~ \ell = 1, \cdots, p. \label{logsol2}
\end{align}
Algorithm \ref{ag1} in the following summarizes the proposed GMD algorithm to solve \eqref{eq:gradient}.
\begin{algorithm}[!htbp]
	\caption{GMD algorithm to solve \eqref{eq:gradient}} \label{ag1}
	
	\begin{enumerate}
		\item Compute the largest eigenvalue $\eta_\ell$ of $\bH_{\ell}$, $\ell = 0, 1, \cdots , p$.
		\item Initialize $\balpha$ and set $\balpha = \balpha^\old$.
		\item Repeat the following cyclic groupwise updates until convergence: 
		\begin{enumerate}
			\item[3.1] Compute $\nabla L(\balpha^\old)$ at a current value of $\balpha^\old$.
			\item[3.2] Update 
			\begin{align*}
				\balpha_{0}^\old    & \leftarrow \hat \balpha_{0} ~ \mbox{from \eqref{logsol1}} \\
				\balpha_{\ell}^\old & \leftarrow \hat \balpha_{\ell} ~ \mbox{from \eqref{logsol2}, for $\ell = 1, 2, \cdots, p$}.
			\end{align*}
		\end{enumerate}
	\end{enumerate}
	
\end{algorithm}

Note that each update to $\balpha_{\ell}$'s in \eqref{logsol1} and \eqref{logsol2} for $\ell = 0, 1, \cdots, p$ requires $O(n^2)$ flops to evaluate the sum, which can be inefficient for a large $n$. 
When calculating $L(\balpha^\old)$ in \eqref{logsol1} and \eqref{logsol2}, one can consider only a small number of data points, say $k$, in the nearest neighbor of $\bx_i$ for each $i=1,...,n$. This substantially reduces the computational complexity if $k$ is much less than $n$ and is chosen independently of $n$.

\subsection{GMD algorithm With the Sequential Strong Rule} \label{ss:ssr}
The performance of regularized methods, such as ours, heavily depends on the choice of tuning parameters. 
To efficiently solve \eqref{eq:gradient} using algorithm \ref{ag1} over a grid of tuning parameter values $\lambda_1\ge\lambda_2\ge \cdots \ge \lambda_ k$, we propose to employ the sequential strong rule \citet{tibshirani2012strong}.

To emphasize the dependence on $\lambda$, let us rewrite $\nabla L(\hat{\balpha}(\lambda))_{\ell}$ as $C_{\ell}(\lambda)$, where $\hat \balpha(\lambda)$ denotes the solution at $\lambda$. Assume that $C_{\ell}$ is a Lipschitz function of $\lambda$ with respect to the $L_2$-norm. The sequential strong sequential rule for our problem \eqref{eq:gradient} is to discard $\hat{\balpha}_{\ell}$ at $\lambda=\lambda_{k}$ if 
\begin{align} \label{ssr}
	\|C_{\ell}(\lambda_{k-1})\|_2 <\theta_{\ell}(2\lambda_k-\lambda_{k-1}),
\end{align}
which greatly facilitates the computation of \eqref{eq:gradient} over a grid of $\lambda$, $\lambda_1\ge\lambda_2\ge \cdots \ge \lambda_k$.

In what follows, we justify the sequential strong rule \eqref{ssr} for \eqref{eq:gradient} by showing that \eqref{ssr} implies $\hat{\balpha}_{\ell}(\lambda_k) = \mathbf{0}$. The Krush-Kuhn-Tucker (KKT) stationary conditions for \eqref{eq:gradient} are 
\begin{align}
	\nabla L(\hat{\balpha})_{0}+\lambda\theta_0\hat{\balpha}_{0} = \mathbf{0}, \qquad & \text{and} \nonumber\\
	\nabla L(\hat{\balpha})_{\ell}+\lambda\theta_{\ell}\bbeta_{\ell}= \mathbf{0}, \qquad &\text{for }  \ell=1,\cdots ,p, \label{eq:stationary}
\end{align}
where $\bbeta_{\ell} $ denote the subgradient of $\|\hat{\balpha}_{\ell}\|_2$, i.e, $\bbeta_\ell$ is 
$\hat \balpha_\ell / \|\hat \balpha \|_2$ if $\hat \balpha_\ell \neq \mathbf{0}$, and any vector with 
$\|\bbeta_\ell\|_2 \le 1$ if $\hat{\balpha}_\ell = \mathbf{0}$. Note that \eqref{eq:stationary} implies that $ \hat{\balpha}_{\ell}=\mathbf{0}$ when $\|\bbeta_{\ell}\|_2<1$.  Since $\|\bbeta_\ell\| \le 1$, we have 
\begin{align} \label{eq:unit.bound}
	\|C_{\ell}(\lambda)-C_{\ell}(\lambda')\|_2 
	& = \theta_{\ell} \|\lambda\bbeta_{\ell}(\lambda)-\lambda'\bbeta_{\ell}(\lambda')\|_2 \nonumber \\
	& \le \theta_{\ell}|\lambda-\lambda'|, \qquad 
	\mbox{for any $\lambda, \lambda'$ and $\ell = 1, \cdots, p$}. 
\end{align}
Given $\hat{\balpha}_{\ell}(\lambda_{k-1})$ at $\lambda_{k-1}$, we have from \eqref{ssr} and \eqref{eq:unit.bound} 
\begin{align*}
	\|C_{\ell}(\lambda_k)\|_2 &\le	\|C_{\ell}(\lambda_{k-1})\|_2 + 	\|C_{\ell}(\lambda_k)-C_{\ell}(\lambda_{k-1})\|_2\\
	& < \theta_{\ell}(2\lambda_k-\lambda_{k-1}) +\theta_{\ell}(\lambda_{k-1}-\lambda_{k}) \\
	& = \theta_k \lambda_k,
\end{align*}
which implies $\|\bbeta_{\ell}\|_2<1$ by \eqref{eq:stationary}, and $\hat{\balpha}_{\ell}=\mathbf{0}$. 

The initial $\lambda$ can be set to be $\lambda_{\max}=\underset{\ell=1,\cdots ,p}{\max}\frac{1}{\theta_\ell}\|\nabla L((\hat{\balpha}_{0},\mathbf{0}))_{\ell}\|_2$, the smallest value of the tuning parameter for which $\hat{\balpha}_{(1)},\cdots, \hat{\balpha}_{(p)}$ are exactly zero. In practice, one can estimate the initial $\hat{\balpha}_{0}$ as
\begin{align} \label{eq:alpha0}
	\hat{\balpha}_{0}=\argmin_{{\balpha}_{0}}
	\frac{1}{n^2}	\sumi \sumj  \omega_{s}(\bx_{i}-\bx_{j})L\{y_i \balpha_{0}^{\T} \bk_{j}\delta_{ij0}\} + 
	\lambda_0\frac{\theta_0}{2}\|\balpha_{0}\|_2^2 ,
\end{align}
with an appropriate $\lambda_0$. 

Let $\mathcal{S}(\lambda_k)$ be an index set of predictors survived after the strong rule \eqref{ssr}:
$$\mathcal{S}(\lambda_{k})=\left\{\ell : \|C_{\ell}(\lambda_{k-1})\|_2\ge \theta_{\ell}(2\lambda_{k}-\lambda_{k-1})	 \right\}.$$
After update $\hat{\balpha}(\lambda_k)$ for $\ell \in \mathcal{S}(\lambda_k) $, check the KKT condition \eqref{eq:stationary} $\|\nabla L(\hat{\balpha})_{\ell}\|_2 \le \lambda_{k}\theta_{\ell}$ for all $\ell \notin \mathcal{S}(\lambda_{k})$. Let $\mathcal{V}(\lambda_{k})$ be the indices of the predictors that violate the KKT conditions. If $\mathcal{V}(\lambda_k) =\emptyset$, we are done. Otherwise update $\mathcal{S}(\lambda_k) \leftarrow  \mathcal{S}(\lambda_k)\cup \mathcal{V}(\lambda_k)$ and recompute $\hat{\balpha}(\lambda_k)$ for $\ell \in \mathcal{S}(\lambda_k) $. 
\begin{algorithm}[h]
	\caption{The GMD algorithm with the sequential strong rule}
	\begin{enumerate}
		\item Compute a set $\mathcal{S}(\lambda_k)$
		\item Initialize $\balpha^*=\hat{\balpha}(\lambda_{k-1})$.
		\item For $\ell \in \mathcal{S}(\lambda_k)$, do Step 3 in Algorithm 1 to update $\hat{\balpha}(\lambda_k)_{\ell}$.
		\item Compute a set $\mathcal{V}(\lambda_k)$
		\item If $\mathcal{V}(\lambda_k)=\emptyset$, then return $\hat{\balpha}=\hat{\balpha}(\lambda_k)$;
		\item [] else update $\mathcal{S}(\lambda_k) \leftarrow  \mathcal{S}(\lambda_k)\cup \mathcal{V}(\lambda_k)$ and go to Step 3.
	\end{enumerate}
\end{algorithm}

\section{Asymptotic Analysis} \label{sec4}
In this section, we study the asymptotic properties of the proposed method in terms of both the estimation and the variable selection consistency. Without loss of generality, we assume that only the first $p_0$ predictors are informative for classification. Let $\Rho(\bx)$ and $\rho(\bx)$ be respectively the marginal distribution and density function of $\bX$ whose support is denoted by $\calX$. We also let $D=\max_{\bx,\bu \in \calX}|\bx-\bu|$. 

Now, we assume the following regularity conditions. 
\begin{itemize}
	\item[$A1.$] The support of $\bX$, $\mathcal{X}$ is a non-degenerate compact metric space in $\mathbb{R}^{p},$ and there exists a constatnt $c_{1}$ such that $\sup _{\bx}\left\|\mathbf{H}^{*}(\bx)\right\|_{2} \leq c_{1},$ $\forall \bx \in \mathcal{X}$, where $\mathbf{H}^{*}(\bx)=\nabla^{2} f^{*}(\bx).$ 
	
	\item[$A2.$] For some constants $c_{2} > 0$ and $0< \tau \le 1$, there exists a function $\rho(\bx)$ satisfying
	$$
	|\rho(\bx)-\rho(\mathbf{u})| \leq c_{2} d_{X}(\bx, \mathbf{u})^{\tau}, \text { for any } \bx, \mathbf{u} \in \mathcal{X}
	$$
	where $d_{X}(\cdot, \cdot)$ is the Euclidean distance on $\mathcal{X}$. 
	
	\item[$A3.$] The loss function  $L$ is convex and twice differentiable. Furthermore, 
	there is a universal constant $c_L>0$ such that
	$\{L'(x)\}^2 \le c_L L(x),$ and $\sup_{|t|\le T}L''(t)<\infty$ for some $T>0$. 
	
	\item[$A4.$] For all $\bx, \bx' \in \mathcal{X}$, $K(\bx,\bx')\le 1$, and the equality holds when $\bx=\bx'$. 
	
	\item[$A5.$] There exist some constants $c_{4}$ and $c_{5}$ such that 
	$$
	c_{4} \leq \lim _{n \to \infty} \min _{1 \leq l \leq p} \theta_{l} \leq \lim _{n \to \infty} \max _{1 \leq l \leq p_{0}} \theta_{l} \leq c_{5}
	\quad  \mbox{and} \quad 
	s^{p+2}\lambda  \min _{l>p_{0}} \theta_{l} \rightarrow \infty.
	$$
	
	\item[$A6.$] Let us define 
	$$
	\mathcal{X}_{t}=\left\{\bx \in \mathcal{X}: d_{X}(\mathbf{x}, \partial \mathcal{X})<t\right\}
	$$
	for a constant $t \ge 0$. For any $\ell \leq p_{0},$ there exists a constant $t$ such that 
	$$
	\int_{\mathcal{X} \backslash \mathcal{X}_{t}}\left\|\frac{\partial f^*(\mathbf{x})}{\partial{x_\ell}}\right\|_{2}^2 d \Rho(\mathbf{x})>0. 
	$$
	and, for any $\ell \geq p_{0}+1$
	$$
	\frac{\partial f^*(\mathbf{x})}{\partial x_\ell} \equiv 0, \qquad \forall \mathbf{x} \in \mathcal{X.}
	$$ 
	
\end{itemize}

In Assumption A1, the compactness of the support $\mathcal{X}$ is assumed for technical convenience as conventionally done in the nonparametric models \citep{mukherjee2006estimation, ye2012learning, stefanski2014variable, yang2016model}, in order to ensure that the solution $(\hat{f}, \hat{\bg})$ exists and even is unique. The bounded assumption on $\|\bH^*(\bx)\|_2$ implies $|f^*(\bu)-f^*(\bx)-\nabla f^*(\bx)^\T(\bu-\bx)| \le c_1 \|\bu-\bx\|_2^2$ for any $\bu$ and $\bx$. Assumption A2 states that the marginal density of $\bX$, $\Rho(\bx)$ is H\"{o}lder continuous with the exponent $\tau$. We note that Assumptions A1 and A2 imply that $\Rho(\bx)$ is bounded. 
Assumption A3 limits the variability of the loss function $L(\cdot)$ but is satisfied by widely used loss functions, including logistic loss with $c_L=\frac{1}{2}$ and squared hinge loss with $c_L = \frac{1}{4}.$ Assumption 4 is pretty standard in RKHS theory and holds for popular kernels such as Gaussian kernel $K(\bx, \bx')=\exp(-\|\bx-\bx'\|^2/2\sigma^{2})$, the Laplacian kernel $K(\bx,\bx')=\exp(-\gamma \|\bx-\bx'\|_1)$. 

Under Assumption A4, the norm of the coefficients of a represented function $h(\cdot)=\sum_{i=1}^{n}a_iK(\cdot,\bx_i)$ is less than or equal to the Hilbert norm of $h$, i.e., 
$\|\mathbf{a}\|_2^2 = \sum_{i=1}^{n}a_i^2 \le \|h\|_{\calH_{K}}^2$. 
Assumption A5 states that one should carefully select $\theta_\ell$ to ensure the variable selection consistency of the proposed method. We propose to use $\theta_\ell = \|\nabla \tilde{f}_\ell\|_{\calH_{K}}^{-\gamma}$ for some $\gamma>0$, where $\tilde{f}_{\ell}$ is the minimizer of \eqref{eq:large.margin} with $J(f)=\|f\|_{\calH_{K}}^2$ and $\nabla \tilde{f}_\ell = \frac{\partial\tilde{f}}{\partial\bx_\ell}$. 
Assumption A6 requires that the true gradient function with respect to the informative variable  $\frac{\partial f^*}{\partial{x_\ell}}$ for $\ell \le p_0$ is  substantially different from zero and  is zero for $\ell \ge p_0+1$. 

\begin{theorem}\label{thm1} Let $(\hat{f},\hat{\bg})$ be the sample estimator of $(f^*, \nabla f^*)$ obtained from \eqref{eq:gradient}. Suppose that A1-A6 are satisfied. For $0<\eta<1/2$, $0<s<\lambda\le 1$, there exists a constant $C$ such that with probability greater than $1-\eta$
	\begin{align} \label{thm1form}
		\max \left\{\|\hat{f}-f^*\|^{2},\|\hat{\bg}-\nabla f^*\|^{2}\right\} \le C\left\{R^{2} s^{\tau}+Rs^{-\tau}\left(\mathscr{E}(s,\lambda)\right)\right\},
	\end{align}
	where 
	\begin{align*}
		\mathscr{E}(s,\lambda)=&\left[\left(1+ \frac{c_L}{2\theta_0\lambda n}+\frac{Dps^{\frac{p+2}{2}}}{\sqrt{n}}\right) \left(\frac{L_{R}R+M_{R}\log{\frac{2}{\eta}}}{\sqrt{n}{s^{p+2}}}+{s^2}+{\lambda }
		\right) + \frac{c_L}{2\theta_0\lambda n} +   \frac{(c_L+1)Dp}{ s^{\frac{p+2}{2}}\sqrt{n} }\right].
	\end{align*}
	and
	\begin{align*}
		R= c_R&\left[\left(1+ \frac{c_L+\sqrt{c_L}Dp}{ ns^{\frac{p+2}{2}}}\right)\left(\left(\frac{L_R}{\sqrt{\lambda s^{p+2}}} +M_R\log\frac{2}{\eta} \right)\frac{1}{\sqrt{n}\lambda s^{p+2}}+\frac{s^2}{\lambda }+1
		\right) + \frac{c_L}{2\theta_0\lambda n} +   \frac{(c_L+1)Dp}{ s^{\frac{p+2}{2}}\sqrt{n} }\right]^{1/2}
	\end{align*}
	for some positive constants $c_R, L_R$, and $M_R$ are explicitly given in Appendix.
\end{theorem}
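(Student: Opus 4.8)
The plan is to pass through a population counterpart of the empirical objective \eqref{eq:def1}, to relate the excess value of that functional to the two target $L^2$ distances, and then to control the excess value by a basic inequality combined with a uniform concentration bound over an RKHS ball of radius $R$. I would introduce the population error functional
\begin{align*}
\calE(f,\bg)=\iint \omega_s(\bx-\bu)\,L\!\left(y\bigl(f(\bu)+\bg(\bu)^\T(\bx-\bu)\bigr)\right)\,d\Rho(\bx,y)\,d\Rho(\bu),
\end{align*}
of which $\hat\calE$ in \eqref{eq:def1} is the $V$-statistic estimate. The first and most delicate step is a comparison inequality showing that the excess functional $\calE(f,\bg)-\calE(f^*,\nabla f^*)$ lower-bounds a weighted quadratic in $f(\bu)-f^*(\bu)$ and $\bg(\bu)-\nabla f^*(\bu)$. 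Expanding $L$ to second order and using convexity together with the local bound on $L''$ from A3 produces a quadratic form in the prediction gap $(f(\bu)-f^*(\bu))+(\bg(\bu)-\nabla f^*(\bu))^\T(\bx-\bu)$ weighted by $\omega_s$. Computing the relevant moments of the Gaussian weight, $\int\omega_s(\bv)\,d\bv\asymp s^{-2}$ and $\int\omega_s(\bv)v_\ell^2\,d\bv\asymp 1$, and then replacing $\rho(\bx)$ by $\rho(\bu)$ at the cost of an $O(s^\tau)$ error via the H\"older continuity A2, yields a bound of the form $\|f-f^*\|^2+\|\bg-\nabla f^*\|^2\lesssim s^{-\tau}\bigl(\calE(f,\bg)-\calE(f^*,\nabla f^*)\bigr)+s^\tau(\cdots)$, which is the origin of the $s^{\pm\tau}$ factors in \eqref{thm1form}.

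Next I would bound the excess value itself. The Taylor remainder controlled by the bounded Hessian A1 gives $|f^*(\bx)-f^*(\bu)-\nabla f^*(\bu)^\T(\bx-\bu)|\le c_1\|\bx-\bu\|_2^2$, so $\calE(f^*,\nabla f^*)$ differs from the irreducible margin risk by an $O(s^2)$ bias once integrated against $\omega_s$; this produces the $s^2$ term in $\mathscr{E}(s,\lambda)$. The estimation error is handled by the basic inequality: since $(\hat f,\hat\bg)$ minimizes the penalized empirical objective \eqref{eq:gradient}, comparing it against a good RKHS approximant $(\bar f,\bar\bg)$ of $(f^*,\nabla f^*)$ gives $\calE(\hat f,\hat\bg)-\calE(\bar f,\bar\bg)\le 2\sup|\hat\calE-\calE|+\lambda\bigl(J(\bar f,\bar\bg)-J(\hat f,\hat\bg)\bigr)$, where the supremum runs over the RKHS ball of radius $R$. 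A uniform concentration bound, obtained through a bounded-difference or Bernstein argument whose variance proxy is controlled by the self-bounding property $\{L'\}^2\le c_LL$ of A3 and whose complexity is controlled by A4 (namely $K\le 1$), produces the $(L_RR+M_R\log(2/\eta))/(\sqrt n\,s^{p+2})$ term, the factor $s^{-(p+2)}$ being exactly the normalization of $\omega_s$; the $Dp/(s^{(p+2)/2}\sqrt n)$ terms arise from the $p$ gradient blocks carrying the bounded multipliers $\delta_{ij\ell}$ with $|\delta_{ij\ell}|\le D$. The ridge contribution $c_L/(2\theta_0\lambda n)$ and the term linear in $\lambda$ come from bounding $\lambda J$ at the approximant.

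Finally, the radius $R$ must be made self-consistent. A preliminary crude bound from the penalty in \eqref{eq:gradient}, using A4 to pass from the coefficient norms $\|\balpha_\ell\|_2$ to the Hilbert norms (so that $\sum_i\alpha_{i\ell}^2\le\|g_\ell\|_{\rkhs}^2$) and using A5 to control the weights $\theta_\ell$, gives $\|\hat f\|_{\rkhs}^2+\sum_{\ell}\|\hat g_\ell\|_{\rkhs}^2\lesssim R^2$ with $R$ the explicit expression stated in the theorem; this legitimizes the concentration step restricted to that ball. Assembling the comparison inequality, the $s^2$ bias bound, the concentration bound and the penalty bound, and consolidating constants, yields \eqref{thm1form}. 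I expect the principal obstacle to be the comparison inequality of the first step: getting the exact powers of $s$ right requires tracking simultaneously how the $s^{-(p+2)}$ normalization of the smoothing kernel interacts with its second moments, with the second-order Taylor remainder of A1, and with the H\"older modulus of $\rho$ in A2. A secondary difficulty is closing the self-referential definition of $R$, so that the a priori norm bound used in the concentration step is consistent with the final rate rather than circular.
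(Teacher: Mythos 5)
Your overall architecture matches the paper's: both pass to the population functional $\calE(f,\bg)$, invoke a comparison inequality producing the $R^{2}s^{\tau}+Rs^{-\tau}(\cdot)$ shape (the paper simply cites Theorem 15 of \citet{mukherjee2006estimation}, where essentially your second-order Taylor/Gaussian-moment/H\"older argument is carried out), bound the excess risk by a basic inequality plus uniform concentration over radius-$R$ balls (the paper uses a McDiarmid bound together with the Rademacher estimate of Lemma \ref{lem_esr}, and cites Theorem 23 of \citet{mukherjee2006estimation} for the $s^{2}+\lambda$ approximation terms), and both close the self-referential definition of $R$ with the crude bound $R^{2}\le L(0)/(\lambda s^{p+2})$ obtained by comparing the objective with the zero function.

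The genuine gap is your treatment of the approximant in the basic inequality. You write $\calE(\hat f,\hat\bg)-\calE(\bar f,\bar\bg)\le 2\sup|\hat\calE-\calE|+\lambda\bigl(J(\bar f,\bar\bg)-J(\hat f,\hat\bg)\bigr)$ for ``a good RKHS approximant $(\bar f,\bar\bg)$ of $(f^*,\nabla f^*)$'', but the penalty \eqref{glpen} is defined only on the data-dependent span $\calH_{\alpha}=\{\sum_i\alpha_iK(\cdot,\bx_i)\}$: for a generic approximant $J_1(\bar\bg)$ is not even defined, and after projecting onto the span its coefficient norm involves $\bK^{-1}$ and is uncontrolled --- the remark under A4 bounds coefficient norms by Hilbert norms only for functions already given in represented form, which is the wrong direction for your purpose. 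The paper's device to repair exactly this is the intermediate ridge-penalized problem \eqref{int}: its solution $(\bar f,\bar\bg)$ lies in $\calH_{\alpha}$ by the representer theorem, and Lemma \ref{lem_pen} bounds its coefficient-based penalties from the stationarity equations of \eqref{int} combined with the self-bounding property $\{L'\}^2\le c_L L$ of A3, yielding $J_0(\bar f)\le \frac{c_L}{2\zeta^2\theta_0 n}\hat\calE(\bar f,\bar\bg)$ and $J_1(\bar g_\ell)\le \frac{\sqrt{c_L}D}{\zeta\theta_\ell\sqrt{n}}\sqrt{\hat\calE(\bar f,\bar\bg)}$. Note these bounds are proportional to the \emph{empirical} error of the intermediate estimator rather than constants, so the decomposition is not additive as you wrote it: the paper's $\mathscr{S}_2$ term carries the multiplicative factor $\bigl[1+\lambda c_L\bigl(\frac{1}{2\theta_0\zeta^2 n}+\frac{Dp\epsilon}{\zeta\sqrt{n}}\bigr)\bigr]$, a second concentration bound over the Hilbert ball $\calF_{r_1}$ is required, and the linearization $\sqrt{x}\le\epsilon x+1/\epsilon$ with $\epsilon=s^{(p+2)/2}$ is what produces the terms $\frac{c_L}{2\theta_0\lambda n}$ and $\frac{(c_L+1)Dp}{s^{(p+2)/2}\sqrt{n}}$ whose origin you correctly guessed but did not derive. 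Without this intermediate-estimator step your basic inequality cannot be instantiated, so the plan as stated does not go through.
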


Theorem \ref{thm1} establishes the consistency of both $\hat{f}$ and $\hat{\bg}$ under proper choices of $\lambda$ and $s$. The following corollaries provide a type of $L_2$ bound of estimators $(\hat f, \hat \bg)$, which is essential to establish the selection consistency of the proposed method. 
\begin{corollary}
	Let $L$ be the logistic loss. Choose $s=n^{-\frac{1}{3(p+2+2\tau) } } $ and $\lambda=s^{2\tau}$. There then exists a constant $C>0$ such that for any $0<\eta<1$ with confidence $1-2\eta$, 
	$$	\max \left\{\|\hat{f}-f^*\|^{2},\|\hat{\bg}-\nabla f^*\|^{2}\right\} \le C\log\frac{2}{\eta} n^{-\frac{\tau}{3(p+2+2\tau)}}. $$	
\end{corollary}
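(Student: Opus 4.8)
The plan is to derive the stated rate purely by substituting the prescribed $s = n^{-1/(3(p+2+2\tau))}$ and $\lambda = s^{2\tau}$ into the bound of Theorem \ref{thm1} and tracking how every term in $R$ and $\mathscr E(s,\lambda)$ scales in $n$. The algebraic engine behind all the simplifications is the identity
\begin{align*}
\lambda s^{p+2} = s^{\,p+2+2\tau} = n^{-1/3},
\end{align*}
together with the observation that the claimed rate equals $s^\tau = n^{-\tau/(3(p+2+2\tau))}$. Thus the goal reduces to showing $R^2 s^\tau + R s^{-\tau}\mathscr E(s,\lambda) \lesssim s^\tau \log\frac{2}{\eta}$, i.e.\ that $R = O(1)$ and $\mathscr E(s,\lambda) = O(s^{2\tau}\log\frac{2}{\eta}) = O(\lambda\log\frac{2}{\eta})$.

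First I would bound $R$. Writing $\beta = 1/(3(p+2+2\tau))$, the identity $\lambda s^{p+2}=n^{-1/3}$ gives $1/\sqrt{\lambda s^{p+2}} = n^{1/6}$ and $1/(\sqrt n\,\lambda s^{p+2}) = n^{-1/6}$, so the leading bracket term collapses to $L_R + M_R n^{-1/6}\log\frac{2}{\eta} = O(1)$, the $n^{-1/6}$ factor killing the logarithm. Since $n s^{(p+2)/2} = n^{1-(p+2)\beta/2}\to\infty$, the factor $1+\frac{c_L+\sqrt{c_L}Dp}{ns^{(p+2)/2}}\to 1$; the term $s^2/\lambda = s^{2(1-\tau)}\le 1$; and because $\lambda n = n^{1-2\tau\beta}\to\infty$ and $s^{(p+2)/2}\sqrt n = n^{1/2-(p+2)\beta/2}\to\infty$, the trailing terms $\frac{c_L}{2\theta_0\lambda n}$ and $\frac{(c_L+1)Dp}{s^{(p+2)/2}\sqrt n}$ vanish (here I use that $\theta_0$ is bounded below). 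Hence the bracket is $O(1)$ and $R = O(1)$.

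Next I would bound $\mathscr E(s,\lambda)$ term by term, showing each piece is $O(\lambda\log\frac{2}{\eta})$. The first factor $1+\frac{c_L}{2\theta_0\lambda n}+\frac{Dps^{(p+2)/2}}{\sqrt n}\to 1$. For the second factor, $\frac{1}{\sqrt n s^{p+2}} = n^{-1/6}\lambda \le \lambda$, so $\frac{L_R R + M_R\log\frac{2}{\eta}}{\sqrt n s^{p+2}} = O(\lambda\log\frac{2}{\eta})$ once $R=O(1)$ is used; moreover $s^2 = n^{-2\beta}\le n^{-2\tau\beta} = \lambda$ because $\tau\le1$. The two remaining additive terms are $O(\lambda)$ as well: $\frac{c_L}{2\theta_0\lambda n} = o(\lambda)$ since $\lambda^2 n = n^{1-4\tau\beta}\to\infty$, and a direct exponent comparison gives $\frac{(c_L+1)Dp}{s^{(p+2)/2}\sqrt n} = O(n^{(p+2)\beta/2 - 1/2}) = O(n^{-2\tau\beta}) = O(\lambda)$, since $(p+2)/2+2\tau \le \tfrac32(p+2+2\tau)$. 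Collecting these gives $\mathscr E(s,\lambda) = O(\lambda\log\frac{2}{\eta})$.

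Finally I would combine everything. Theorem \ref{thm1} gives, on its event of probability at least $1-\eta$,
\begin{align*}
\max\{\|\hat f - f^*\|^2,\|\hat\bg-\nabla f^*\|^2\} \le C\{R^2 s^\tau + R s^{-\tau}\mathscr E(s,\lambda)\} \le C'\,s^\tau\log\tfrac{2}{\eta} = C'\log\tfrac{2}{\eta}\, n^{-\tau/(3(p+2+2\tau))}.
\end{align*}
The additional $\eta$ in the stated confidence $1-2\eta$ comes from a second high-probability event needed to guarantee that the adaptive weight $\theta_0$ (defined through the ridge-type pilot estimator $\tilde f$) is bounded below, which I invoked when handling $\frac{c_L}{2\theta_0\lambda n}$; a union bound over this event and the event of Theorem \ref{thm1} yields $1-2\eta$. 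The main obstacle is not any single estimate but the bookkeeping: one must verify that the balancing choice $\lambda s^{p+2}=n^{-1/3}$ simultaneously forces the stochastic error $1/(\sqrt n s^{p+2})$, the smoothing bias $s^2$, and the penalty bias $\lambda$ to all be $O(\lambda)$, and---most delicately---that the logarithmic factor survives only to the first power, which hinges on the $n^{-1/6}$ decay suppressing the $\log\frac{2}{\eta}$ contribution inside $R$ so that the single remaining log factor is the one carried by $\mathscr E$.
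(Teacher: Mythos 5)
Your route is the intended one: the paper gives no standalone proof of this corollary, and it is meant to follow from Theorem \ref{thm1} by direct substitution of $s=n^{-1/(3(p+2+2\tau))}$ and $\lambda=s^{2\tau}$. Your exponent bookkeeping is correct throughout: $\lambda s^{p+2}=n^{-1/3}$, hence $1/(\sqrt{n}\,\lambda s^{p+2})=n^{-1/6}$ and $1/(\sqrt{n}\,s^{p+2})=n^{-1/6}\lambda$; $s^2\le\lambda$ from $\tau\le 1$; $1/(\lambda n)=o(\lambda)$ from $\lambda^2 n\to\infty$; and the comparison $(p+2)/2+2\tau\le\tfrac32(p+2+2\tau)$ correctly gives $s^{-(p+2)/2}n^{-1/2}=O(\lambda)$. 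This yields $R=O(1)$, $\mathscr{E}(s,\lambda)=O(\lambda\log\frac{2}{\eta})$, and the stated rate $s^\tau\log\frac{2}{\eta}$.

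Two corrections. First, your explanation of the confidence level $1-2\eta$ is wrong: there is no high-probability event concerning $\theta_0$ anywhere in the paper ($\theta_0$ is simply a positive tuning weight in the ridge penalty $J_0$, and Assumption A5 constrains only $\theta_\ell$ for $\ell\ge 1$). The factor $2$ originates \emph{inside} the proof of Theorem \ref{thm1} itself, which unions two concentration events, each at level $1-\eta$: the deviation bounds \eqref{eq_mc}--\eqref{eq_mc2} for $\mathscr{S}_1,\mathscr{S}_2$, and the bound on $\calE(\bar f,\bar\bg)-\calR_s$ and on $r_1$ imported from Theorem 23 of \citet{mukherjee2006estimation}; the proof's last line explicitly concludes ``confidence at least $1-2\eta$'' even though the theorem statement says $1-\eta$. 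Second, you treat $L_R$ and $M_R$ as absolute constants, but they are defined through $R$ itself ($L_R=\max\{L'(\pm(1+D)R)\}$, $M_R=\max\{L(\pm(1+D)R)\}$), so ``$L_R+M_R n^{-1/6}\log\frac{2}{\eta}=O(1)$'' is not immediate. For the logistic loss this is repairable by a self-bounding step: $|L'|\le 1$ gives $L_R\le 1$, and $M_R\le\log 2+(1+D)R$ grows only linearly in $R$, so the bound on $R^2$ has the form $R^2\le a+bR$ with $b=o(1)$ for fixed $\eta$, whence $R=O(1)$. You should make this quadratic-inequality step explicit; it is precisely where the logistic loss (as opposed to a loss with superlinearly growing $M_R$) is being used, alongside $c_L=\tfrac12$ from A3.
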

\begin{corollary}
	Let $L$ be the squared hinge loss. Choose $s=n^{-\frac{1}{4(p+2+2\tau) } } $ and $\lambda=s^{2\tau}$. There then exists a constant $C>0$ such that for any $0<\eta<1$ with confidence $1-2\eta$, 
	$$	\max \left\{\|\hat{f}-f^*\|^{2},\|\hat{\bg}-\nabla f^*\|^{2}\right\} \le C\log\frac{2}{\eta} n^{-\frac{\tau}{4(p+2+2\tau)}}. $$	
\end{corollary}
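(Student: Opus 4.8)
The plan is to treat this purely as a corollary of Theorem~\ref{thm1}: substitute the prescribed schedule $s=n^{-1/(4(p+2+2\tau))}$ and $\lambda=s^{2\tau}$ into the bound \eqref{thm1form}, specialize the loss-dependent constants to the squared-hinge values ($c_L=\tfrac14$ and majorization constant $C=1$), and then show that every term collapses to order $s^{\tau}=n^{-\tau/(4(p+2+2\tau))}$. There is no new probabilistic argument to run; the content is bookkeeping the power of $n$ carried by each factor in $R$ and $\mathscr{E}(s,\lambda)$ and verifying that the prescribed exponent is exactly the one that balances them.

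First I would record the identity that motivates the exponent: since $\lambda s^{p+2}=s^{2\tau}s^{p+2}=s^{p+2+2\tau}$ and $s=n^{-1/(4(p+2+2\tau))}$, one gets the clean relation $\lambda s^{p+2}=n^{-1/4}$ (the squared-hinge analogue of $\lambda s^{p+2}=n^{-1/3}$ behind the logistic corollary). From this I would read off $\tfrac{1}{\sqrt{\lambda s^{p+2}}}=n^{1/8}$, $\tfrac{1}{\sqrt n\,\lambda s^{p+2}}=n^{-1/4}$, $\lambda=n^{-\tau/(2(p+2+2\tau))}$, $s^2=n^{-1/(2(p+2+2\tau))}$, and $\tfrac{1}{\sqrt n\,s^{p+2}}=n^{-(p+2+4\tau)/(4(p+2+2\tau))}$, together with the fact that $\tfrac{1}{n s^{(p+2)/2}}$, $\tfrac{1}{\lambda n}$ and $\tfrac{1}{\sqrt n\,s^{(p+2)/2}}$ all tend to $0$.

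Next I would show $R=O(1)$. In the bracket defining $R$ the prefactor $1+\tfrac{c_L+\sqrt{c_L}Dp}{ns^{(p+2)/2}}\to1$, the two trailing pieces $\tfrac{c_L}{2\theta_0\lambda n}$ and $\tfrac{(c_L+1)Dp}{s^{(p+2)/2}\sqrt n}$ vanish, and in the middle factor the stochastic part $\bigl(\tfrac{L_R}{\sqrt{\lambda s^{p+2}}}+M_R\log\tfrac2\eta\bigr)\tfrac{1}{\sqrt n\,\lambda s^{p+2}}=L_R\,O(n^{-1/8})+M_R\log\tfrac2\eta\,O(n^{-1/4})\to0$ while $\tfrac{s^2}{\lambda}=s^{2(1-\tau)}\le1$; hence the bracket converges to a constant and $R$ stays bounded as $n\to\infty$. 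In parallel I would show $\mathscr{E}(s,\lambda)=O(s^{2\tau})$: its leading factor tends to $1$, and in the inner sum the terms $\tfrac{1}{\sqrt n\,s^{p+2}}$ and $s^2$ are both $o(\lambda)$ or $O(\lambda)$ (because $p+2+4\tau>2\tau$ and $2\ge2\tau$), so $\lambda=s^{2\tau}$ dominates, the two trailing terms again being negligible. Plugging $R=O(1)$ and $\mathscr{E}=O(s^{2\tau})$ into \eqref{thm1form} gives $R^2s^{\tau}=O(s^{\tau})$ and $Rs^{-\tau}\mathscr{E}=O(s^{-\tau}s^{2\tau})=O(s^{\tau})$, i.e. the claimed order $n^{-\tau/(4(p+2+2\tau))}$. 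The residual $\log\tfrac2\eta$ factors occur only in strictly subdominant terms, so they can be absorbed into a single prefactor using $\log\tfrac2\eta\ge\log2$, and the confidence $1-2\eta$ follows from a union bound over the event of Theorem~\ref{thm1} and the event on which $R$ is controlled (each of probability at least $1-\eta$).

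The hard part will be the careful accounting rather than any single estimate: one must confirm that \emph{all} of the many additive pieces in $R$ and $\mathscr{E}$—the empirical-process terms governed by $L_R,M_R$, the approximation terms $s^2$ and $\lambda$, and the $O(1/(\lambda n))$ and $O(1/(\sqrt n\,s^{(p+2)/2}))$ remainders—are dominated by $s^{\tau}$ once $s$ and $\lambda$ are fixed, and that a single power of $\log\tfrac2\eta$ suffices to cover them. I would also want to explain the conservative exponent $1/(4(p+2+2\tau))$, which is chosen precisely so that $\lambda s^{p+2}=n^{-1/4}$: for the squared-hinge loss the binding term of the appendix bound scales like $n^{-1/2}(\lambda s^{p+2})^{-2}$ (a consequence of the larger majorization constant $C=1$ and the curvature control $\{L'\}^2\le c_L L$ in A3), so it is exactly $O(1)$ at this exponent but would blow up for smaller $s$, whereas the logistic bound only involves the milder $n^{-1/2}(\lambda s^{p+2})^{-3/2}$ and therefore tolerates $\lambda s^{p+2}=n^{-1/3}$. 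Verifying that this is indeed the most restrictive term at the prescribed $s$ is the crux; once it is located, the balancing and the final rate are immediate.
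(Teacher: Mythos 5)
Your proposal is correct and follows exactly the route the paper intends: the paper states this corollary without a separate proof, since it is a direct substitution of $s=n^{-1/(4(p+2+2\tau))}$, $\lambda=s^{2\tau}$ (hence $\lambda s^{p+2}=n^{-1/4}$) into the bound of Theorem~\ref{thm1}, verifying $R=O(1)$ and $\mathscr{E}(s,\lambda)=O(s^{2\tau})$ so that both $R^2s^{\tau}$ and $Rs^{-\tau}\mathscr{E}$ are of order $n^{-\tau/(4(p+2+2\tau))}$, with the confidence $1-2\eta$ coming from the two probability-$\eta$ events in the proof of Theorem~\ref{thm1}. Your exponent bookkeeping checks out, and your closing observation---that the binding term scales as $n^{-1/2}(\lambda s^{p+2})^{-2}$ for the squared hinge loss (since $M_R$ grows quadratically in $R$) versus $n^{-1/2}(\lambda s^{p+2})^{-3/2}$ for the logistic loss, which is precisely why this corollary uses the exponent $1/4$ where Corollary~1 uses $1/3$---correctly supplies a rationale the paper leaves implicit.
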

For ease of representation, let $\mathcal{A}^*=\left\{1, \cdots , p_0\right\}$ be the index sets that contain all indices of true informative variables, and $\hat{\mathcal{A}}= \left\{\ell : \|\hat{\balpha}_{\ell}\|_2 >0, j=1, \cdots , p \right\}$ be the corresponding estimator from \eqref{eq:gradient}. Finally, Theorem \ref{thm2} states the variable selection consistency of our method.

\begin{theorem}\label{thm2}
	Suppose that all the assumptions in Theorem 1 are satisfied. Choose $\lambda$ and $s$ as Corollary 1 and 2 depending on the loss. Then
	$ P\left(\widehat{\mathcal{A}}=\mathcal{A}^{*}\right) \rightarrow 1 $ as $n \to \infty$.
\end{theorem}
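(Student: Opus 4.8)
The plan is to prove the two inclusions $\mathcal{A}^*\subseteq\widehat{\mathcal{A}}$ (no under-selection) and $\widehat{\mathcal{A}}\subseteq\mathcal{A}^*$ (no over-selection), each with probability tending to one; their conjunction yields $P(\widehat{\mathcal{A}}=\mathcal{A}^*)\to 1$. Both directions are driven by the $L_2$ rate $\max\{\|\hat f-f^*\|^2,\|\hat{\bg}-\nabla f^*\|^2\}\to 0$ furnished by Corollaries 1 and 2 under the prescribed $s$ and $\lambda$.

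First I would dispatch \emph{no under-selection}. Fix an informative coordinate $\ell\le p_0$ and suppose, for contradiction, that $\hat{\balpha}_\ell=\mathbf{0}$, i.e. $\hat g_\ell\equiv 0$. Then
\begin{align*}
	\|\hat g_\ell-\partial f^*/\partial x_\ell\|^2=\|\partial f^*/\partial x_\ell\|^2\ge \int_{\calX\setminus\mathcal{X}_t}\left\|\frac{\partial f^*(\bx)}{\partial x_\ell}\right\|_2^2\, d\Rho(\bx)>0,
\end{align*}
where the final strict inequality is Assumption A6. This contradicts $\|\hat{\bg}-\nabla f^*\|^2\to 0$, so on the high-probability event of the corollaries every $\ell\le p_0$ has $\hat{\balpha}_\ell\neq\mathbf{0}$, and a union bound over the finitely many informative coordinates gives $\mathcal{A}^*\subseteq\widehat{\mathcal{A}}$.

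The substantive direction is \emph{no over-selection}. For a noise coordinate $\ell>p_0$, the KKT stationarity condition \eqref{eq:stationary} shows that $\hat{\balpha}_\ell=\mathbf{0}$ whenever $\|\nabla L(\hat{\balpha})_\ell\|_2<\lambda\theta_\ell$, so it suffices to upper-bound the block gradient and lower-bound the adaptive threshold. For the threshold I would exploit the construction $\theta_\ell=\|\nabla\tilde f_\ell\|_{\calH_K}^{-\gamma}$: because $\tilde f$ is a consistent ridge-penalized estimator and $\partial f^*/\partial x_\ell\equiv 0$ for $\ell>p_0$ by A6, one has $\|\nabla\tilde f_\ell\|_{\calH_K}\to 0$, hence $\theta_\ell\to\infty$, at a rate matched to Assumption A5 so that $\min_{\ell>p_0}\lambda\theta_\ell\gg s^{-(p+2)}$. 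For the gradient, I would write $\nabla L(\hat{\balpha})_\ell$ as the $\omega_s$-weighted pairwise average carrying the factor $\delta_{ij\ell}$, then add and subtract its population counterpart evaluated at $(f^*,\nabla f^*)$. The population block gradient vanishes to leading order for $\ell>p_0$ since $\partial f^*/\partial x_\ell\equiv 0$, leaving only a Taylor-remainder bias controlled by $c_1$ of A1; the estimation error is absorbed through $\|\hat f-f^*\|$ and $\|\hat{\bg}-\nabla f^*\|$ together with the growth bound $\{L'\}^2\le c_L L$ of A3; and the centered empirical-process term is handled by the same concentration machinery used for Theorem~\ref{thm1}. This produces a uniform bound $\max_{\ell>p_0}\|\nabla L(\hat{\balpha})_\ell\|_2=O_p\big(s^{-(p+2)}\big)$ up to factors that vanish with the estimation rate.

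The decisive step is the comparison of these two bounds: Assumption A5 is engineered precisely so that $\lambda\theta_\ell$ dominates $s^{-(p+2)}$ uniformly over $\ell>p_0$, and combined with the gradient bound this gives $\|\nabla L(\hat{\balpha})_\ell\|_2<\lambda\theta_\ell$ for every noise coordinate with probability tending to one, whence $\hat{\balpha}_\ell=\mathbf{0}$ and $\widehat{\mathcal{A}}\subseteq\mathcal{A}^*$. I expect the main obstacle to be the uniform upper bound on the noise-block gradients: one must disentangle the $s^{-(p+2)}$ smoothing scale, the Taylor bias, and the coupling of $\hat{\balpha}_\ell$ to the informative blocks through the shared loss, and then verify that A5 indeed calibrates $\lambda$, $s$, and the diverging weights $\theta_\ell$ so that the penalty threshold wins. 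Tracking the divergence rate of $\theta_\ell$ back to the risk bound for $\tilde f$, and confirming its compatibility with the bandwidth constraint in A5, is the most delicate bookkeeping.
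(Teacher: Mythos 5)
Your proposal is correct in outline and shares the paper's two-inclusion architecture, but the decisive over-selection step is carried out by a genuinely different (and much heavier) route than the paper's. For no under-selection you argue exactly as the paper does in spirit, and in fact more directly: the paper detours through the interior region $\mathcal{X}_s$ and Theorem 15 of \citet{mukherjee2006estimation} before invoking A6, whereas you compare full $L^2_{\rho_{\mathcal{X}}}$ norms, which is equivalent and cleaner since Theorem \ref{thm1} already controls $\|\hat{\bg}-\nabla f^*\|^2$ globally. For no over-selection, both you and the paper reduce to the KKT dichotomy (if $\hat{\balpha}_\ell\neq\mathbf{0}$ then $\|\nabla L(\hat{\balpha})_\ell\|_2=\lambda\theta_\ell$ exactly) and then let A5 win; the difference is how the block gradient is bounded. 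You propose an empirical-process decomposition around the population gradient at $(f^*,\nabla f^*)$, with Taylor bias via A1 and concentration as in Theorem \ref{thm1}. The paper instead gets a one-line \emph{deterministic} bound: a slight modification of Lemma \ref{lem_pen} (Cauchy--Schwarz plus the self-bounding property $\{L'\}^2\le c_L L$ of A3) gives $\|\nabla L(\hat{\balpha})_\ell\|_2\le \sqrt{c_L}\,D\sqrt{\hat{\calE}(\hat{f},\hat{\bg})}$, and comparing the objective at $(\hat f,\hat{\bg})$ with the zero function gives $\hat{\calE}(\hat{f},\hat{\bg})\le L(0)/s^{p+2}$, so the gradient is $O(s^{-(p+2)/2})$ with no concentration, no uniformity argument, and no use of the vanishing population gradient; A5 ($s^{p+2}\lambda\theta_\ell\to\infty$) then dominates immediately. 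Your coarser $O_p(s^{-(p+2)})$ target would also suffice against A5, so your plan can be completed, but note two caveats: (i) your claim that the population block gradient ``vanishes to leading order'' is only approximate, since $\omega_s$ is symmetric but the marginal density of $\bX$ need not be symmetric in $x_\ell$ about $u_\ell$, leaving a nonzero bias of smoothing-scale order --- harmless for your comparison but a real term to track; and (ii) the ``delicate bookkeeping'' you flag about tracking the divergence rate of $\theta_\ell$ through the risk bound for $\tilde f$ is unnecessary, because A5 is an \emph{assumption} inherited from Theorem \ref{thm1} (the adaptive construction $\theta_\ell=\|\nabla\tilde f_\ell\|_{\calH_K}^{-\gamma}$ is only offered as a practical recipe), so no verification is required within the proof. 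In short: your route buys an interpretable bias--variance decomposition of the noise-block gradients, while the paper's buys a two-line argument that sidesteps all stochastic analysis in this step.
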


\section{Simulation} \label{sec5}
\begin{figure*}[t]
\centering
\subfigure[M1 - $f(\bx) = x_{i1} - x_{i2}$]{
\includegraphics[width = 0.4\textwidth]{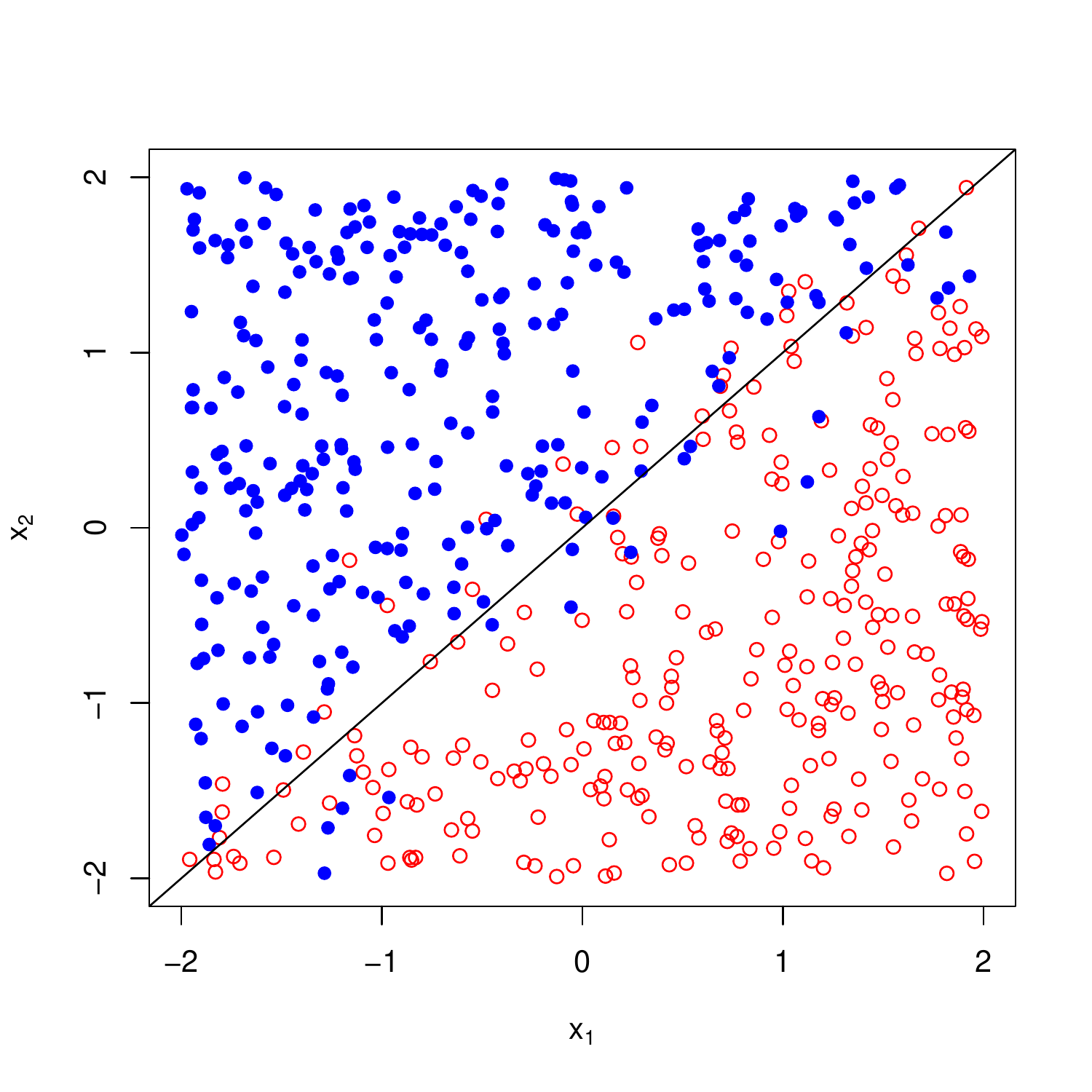}}
\subfigure[M2 - $f(\bx) = \sqrt{x_{i1}^2+ x_{i2}^2}\log\sqrt{x_{i1}^2+ x_{i2}^2}$]{
\includegraphics[width = 0.4\textwidth]{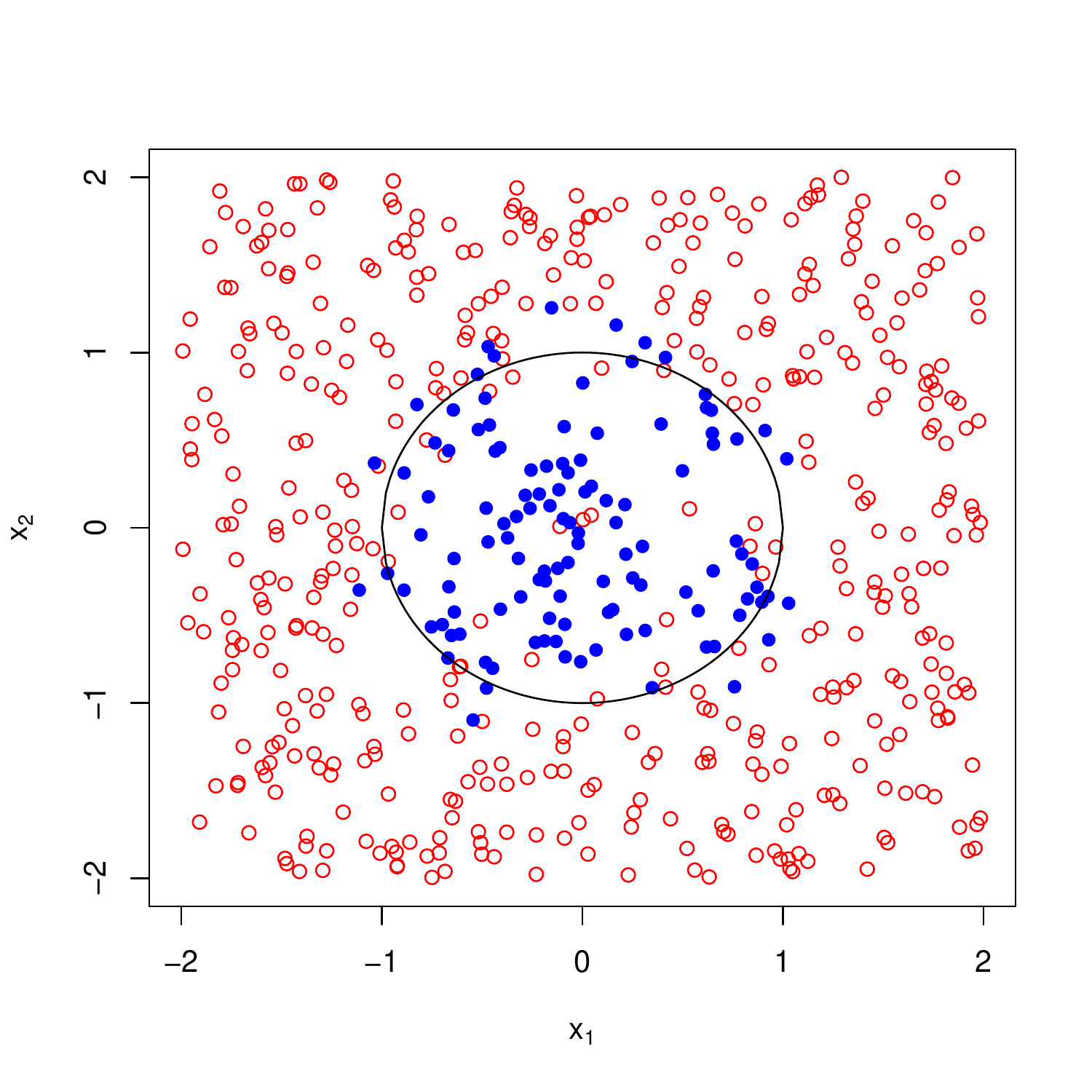}}\\
\subfigure[M3 - $f(\bx) = x_{i1}^2 - x_{i2}^2- 0.25 $]{
	\includegraphics[width = 0.4\textwidth]{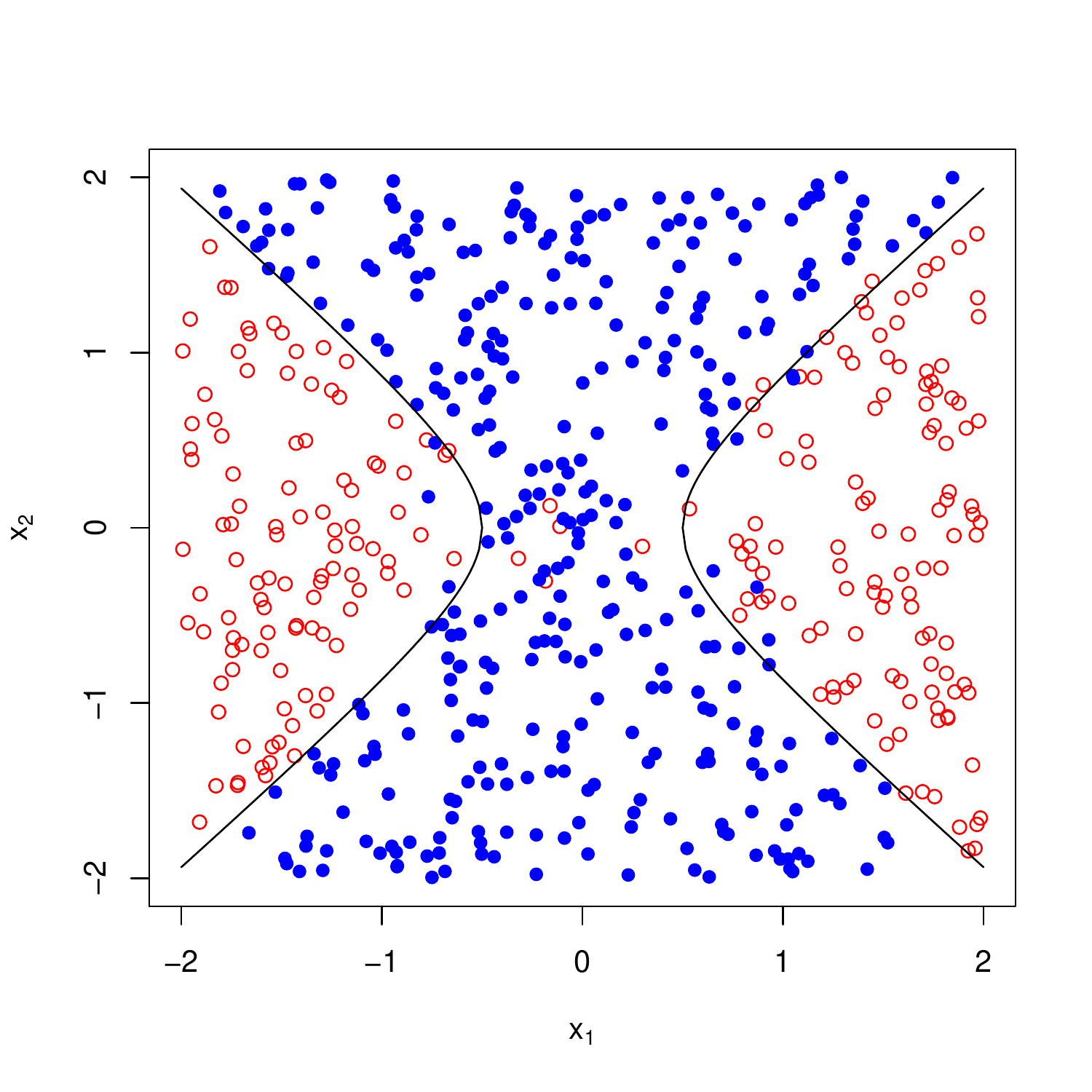}}
\subfigure[M4 - $f(\bx) = x_{i1}x_{i2}$]{
	\includegraphics[width = 0.4\textwidth]{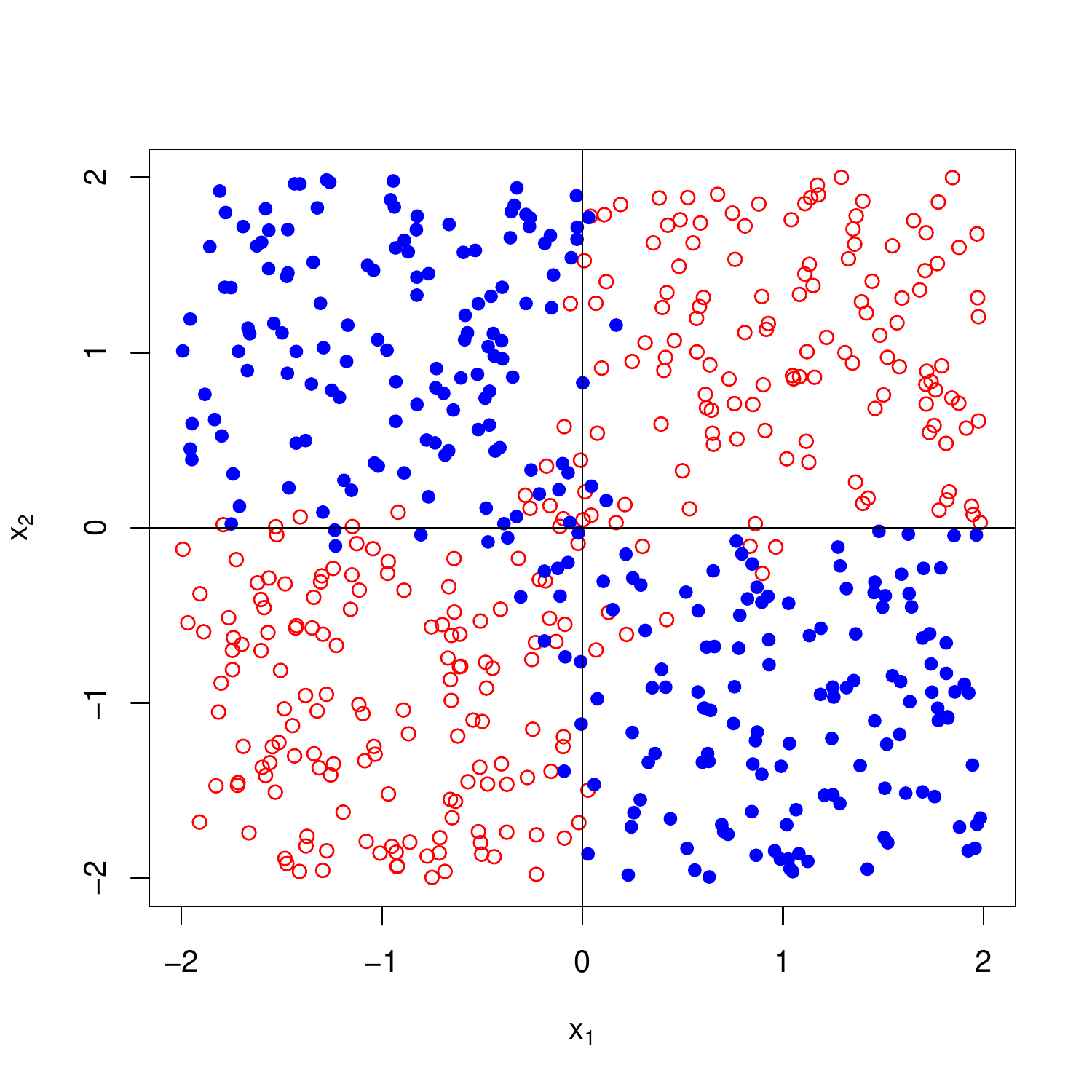}}
\caption{Illustration of the classification functions considered in the simulation. }\label{fig:sim}
\end{figure*}

We conducted a simulation study to evaluate the finite-sample performance of the proposed method. We employ the Gaussian kernel $K(\bx,\bu)=e^{-\|\bx-\bu\|_2^2/2\sigma^2}$. The scalar parameters $\sigma^2$ and $s^2$ in $\omega_{s}(\bx-\bu)$ are set to be the median over the pairwise distances between all the sample points as suggested by \citet{mukherjee2006estimation}. Two different loss functions, the logistic and the squared hinge loss (resp. denoted by Logit and Hinge$^2$) are employed. 

As competing methods, we consider sparse kernel discriminant analysis (SKDA; \citealp{stefanski2014variable}), random forest variable selection through backward elimination (RF; \citealp{diaz2006gene}), the component selection and smoothing (COSSO ;\citealp{lin2006component}), and linear logistic regression with SCAD penalty (SCAD). The tuning parameters in all methods are selected by ten-fold cross-validation to minimize the classification error. For the random forest, we set the number of trees as 3000 and the proportion of variables to be eliminated at each iteration as 10\%.

In addition to the variable selection performance, we also compare the prediction performance. Although our method can estimate the classifier, we do not directly use $\hat f$ for the prediction since it cannot fully reflect the variable selection effect. Instead, we compute the test errors of our methods by fitting ordinary logistic or squared hinge classifiers \eqref{eq:large.margin} using selected variables only.  

To simulate the data, we assume the following model 
$$
y_i = \mbox{sign} \{f(\bx_i) + 0.2\epsilon_i\}
$$
where $\epsilon_i \sim N(0, 1)$ with four different classification functions:
\begin{itemize}
	\item [] (M1) $f(\bx) = x_{i1} - x_{i2}$
	\item [] (M2) $f(\bx) = \sqrt{x_{i1}^2+ x_{i2}^2}\log\sqrt{x_{i1}^2+ x_{i2}^2}$
	\item [] (M3) $f(\bx) = x_{i1}^2 - x_{i2}^2- 0.25 $
	\item [] (M4) $f(\bx) = x_{i1}x_{i2}$
\end{itemize}
The $p$-dimensional predictor $\bx_i =(x_{i1},\cdots , x_{ip})^\T$ are generated as follows. We first generate $W_{ij}$ and $U_i$ from $U(-2,2)$ independently where $i=1,\cdots ,n$ and $j=1,\cdots ,p$ $x_{ij}=\frac{1}{2}(W_{ij}+U_{i})$. We set $n = 500$ and $p \in \{10, 50\}$. Under all models (M1)--(M4), the first two predictors are informative, i.e., $\mathcal{A} = \{1, 2\}$. 

The first model (M1) corresponds to a simple linear classification problem, while the rest (M2)--(M4) are highly nonlinear. The classification boundary of (M2) is a circle with radius 1, and that of (M3) is a hyperbola with the foci $(-1/\sqrt{2},0)$ and $(1/\sqrt{2},0)$.  The last model (M4) represents the interaction model between  $x_1$ and $x_2$. See Figure \ref{fig:sim}, which illustrates the classification boundaries of all models.

\begin{table*}[!htbp]
	\centering
	\caption{The averaged performance measures of various variable selection methods in (M1)--(M4).  The numbers in parentheses are the corresponding standard deviations.}\label{tb1}
	\scalebox{.8}{
		\begin{tabular}{@{}cc rrrr c rrrr@{}}
			\toprule
			\multirow{2}{*}{Model}&   \multirow{2}{*}{Method}        & \multicolumn{4}{c}{$p=10$}                       &  & \multicolumn{4}{c}{$p=50$}                         \\ \cmidrule{3-6} \cmidrule{8-11}
			&     & \multicolumn{1}{c}{TP} & \multicolumn{1}{c}{FP} & \multicolumn{1}{c}{Correct} & \multicolumn{1}{c}{Test error}
			&     & \multicolumn{1}{c}{TP} & \multicolumn{1}{c}{FP} & \multicolumn{1}{c}{Correct} & \multicolumn{1}{c}{Test error}\\ \midrule
			\multirow{6}{*}{M1} 
			& Logit     & 2.00 (0.00)   & 0.29 (0.69) &  81$~~$ & 0.079 (0.012) &  & 2.00 (0.00) &  0.55 (1.36)   & 82$~~$ & 0.079 (0.014) \\
			& Hinge$^2$ & 2.00 (0.00)   & 0.71 (1.27) &  71$~~$ & 0.080 (0.012) &  & 2.00 (0.00) &  0.94 (1.77)   & 69$~~$ & 0.081 (0.013) \\
			& SKDA      & 2.00 (0.00)   & 0.10 (0.30) &  90$~~$ & 0.087 (0.013) &  & 2.00 (0.00) &  0.07 (0.29)   & 94$~~$ & 0.085 (0.013) \\
			& RF        & 2.00 (0.00)   & 2.56 (3.20) &  53$~~$ & 0.100 (0.014) &  & 2.00 (0.00) & 29.41 (19.3)   & 15$~~$ & 0.124 (0.024) \\
			& COSSO     & 2.00 (0.00)   & 0.00 (0.00) & 100$~~$ & 0.082 (0.013) &  & 2.00 (0.00) &  0.02 (0.20)   & 99$~~$ & 0.090 (0.049) \\
			& SCAD      & 2.00 (0.00)   & 0.36 (0.87) &  81$~~$ & 0.078 (0.011) &  & 2.00 (0.00) &  0.77 (1.67)   & 75$~~$ & 0.076 (0.012) \\
			\midrule
			\multirow{6}{*}{M2} 
			& Logit     & 2.00 (0.00) & 0.00 (0.00) & 100$~~$ & 0.158 (0.018) &  & 2.00 (0.00) &  0.11 (0.31)   & 89$~~$ & 0.162 (0.020) \\
			& Hinge$^2$ & 2.00 (0.00) & 0.00 (0.00) & 100$~~$ & 0.152 (0.017) &  & 2.00 (0.00) &  0.10 (0.33)   & 91$~~$ & 0.155 (0.020) \\
			& SKDA      & 2.00 (0.00) & 0.72 (0.45) &  28$~~$ & 0.137 (0.018) &  & 2.00 (0.00) &  0.71 (0.48)   & 30$~~$ & 0.138 (0.016) \\
			& RF        & 2.00 (0.00) & 1.41 (2.37) &  64$~~$ & 0.145 (0.017) &  & 2.00 (0.00) & 10.59 (15.2)   & 40$~~$ & 0.155 (0.021) \\
			& COSSO     & 1.99 (0.10) & 0.00 (0.00) &  99$~~$ & 0.133 (0.021) &  & 1.66 (0.57) &  4.95 (4.09)   &  5$~~$ & 0.186 (0.087) \\
			& SCAD      & 0.18 (0.41) & 0.68 (1.21) &   0$~~$ & 0.518 (0.023) &  & 0.02 (0.14) &  0.81 (1.50)   &  0$~~$ & 0.520 (0.026) \\
			\midrule
			\multirow{6}{*}{M3} 
			& Logit     & 2.00 (0.00) & 0.01 (0.10) & 99$~~$ & 0.093 (0.016) &  & 2.00 (0.00) &  0.00 (0.00)   & 100$~~$ & 0.096 (0.015) \\
			& Hinge$^2$ & 2.00 (0.00) & 0.01 (0.10) & 99$~~$ & 0.092 (0.015) &  & 2.00 (0.00) &  0.00 (0.00)   & 100$~~$ & 0.094 (0.015) \\
			& SKDA      & 2.00 (0.00) & 0.04 (0.20) & 96$~~$ & 0.102 (0.017) &  & 2.00 (0.00) &  0.04 (0.20)   &  96$~~$ & 0.105 (0.017) \\
			& RF        & 2.00 (0.00) & 1.46 (2.49) & 63$~~$ & 0.106 (0.015) &  & 2.00 (0.00) & 20.97 (19.3)   &  23$~~$ & 0.133 (0.030) \\
			& COSSO     & 1.00 (0.00) & 0.00 (0.00) &  0$~~$ & 0.198 (0.019) &  & 0.90 (0.44) &  5.43 (3.71)   &   0$~~$ & 0.226 (0.065) \\
			& SCAD      & 0.13 (0.39) & 0.63 (1.36) &  0$~~$ & 0.359 (0.022) &  & 0.06 (0.28) &  1.29 (2.71)   &   0$~~$ & 0.361 (0.024) \\ \midrule
			\multirow{6}{*}{M4} 
			& Logit     & 2.00 (0.00) & 0.01 (0.10) &  99$~~$ & 0.203 (0.021) &  & 2.00 (0.00) &  0.16 (0.47)   & 88$~~$  & 0.205 (0.024) \\
			& Hinge$^2$ & 2.00 (0.00) & 0.00 (0.00) & 100$~~$ & 0.197 (0.020) &  & 2.00 (0.00) &  0.34 (1.30)    & 86$~~$  & 0.202 (0.031) \\
			& SKDA      & 1.98 (0.20) & 0.52 (0.63) &  52$~~$ & 0.219 (0.029) &  & 2.00 (0.00) &  0.85 (0.43)   & 18$~~$  & 0.213 (0.022) \\
			& RF        & 2.00 (0.00) & 2.17 (2.83) &  50$~~$ & 0.200 (0.020) &  & 2.00 (0.00) & 15.18 (14.4)   & 21$~~$  & 0.241 (0.045) \\
			& COSSO     & 1.93 (0.26) & 0.35 (0.73) &  71$~~$ & 0.366 (0.072) &  & 1.30 (0.68) &  9.70 (6.37)   &  0$~~$  & 0.379 (0.086) \\
			& SCAD      & 0.17 (0.47) & 0.73 (1.56) &   0$~~$ & 0.635 (0.080) &  & 0.02 (0.20) &  1.12 (2.49)   &  0$~~$  & 0.639 (0.073) \\	
			
			\bottomrule
		\end{tabular}
	}
\end{table*}
We report both true positives (TP) and false positives (FP) that count the numbers of correctly selected variables and incorrectly selected variables, respectively, to evaluate the variable selection performance. The larger TP and the smaller FP represent the better model, and the perfect variable selection performance corresponds to the TP of 2 and the FP of 0. We also count the number of cases in which a variable selection method estimates the true model with $x_1$ and $x_2$ only (Correct). Finally, the classification error rates for the independent test set with size $n=1000$ are reported to compare the prediction accuracy after the variable selection. 

Table \ref{tb1} contains the aforementioned performance measures averaged over 100 independent repetitions. As expected, in the simplest linear case (M1), the logistic regression with the SCAD penalty shows promising performance. It is rather surprising that COSSO shows almost perfect performance under (M1). All other methods, except RF, perform reasonably well in terms of both variable selection and prediction accuracy. RF always tends to produce larger numbers of TP than others since it employs the backward elimination approach for variable selection.

The logistic regression with the SCAD penalty completely fails under the nonlinear scenarios (M2)--(M4). COSSO is still competitive when $p = 10$ while it deteriorates when $p = 50$. Although SKDA performs reasonably well in most cases, the proposed method (regardless of loss functions) outperforms in these nonlinear scenarios. We would like to point out that our methods show much better performance, especially when $p = 50$.

\section{Illustration to Real Data} \label{sec6}

As a final showcase, we apply the proposed methods to Wisconsin breast cancer data(WBCD). The WBCD data is available from the UC Irvine Machine Learning Repository (\hyperbaseurl{http://archive.ics.uci.edu/ml}http://archive.ics.uci.edu/ml). The data contains information from 569 ($=n$) patients, including binary indicators of tumor malignancy (malicious/severe) and 30($=p$) possibly related predictors. We standardized each predictor marginally to have zero-mean and unit-variance. In this section, we report the results of the logistic loss only to avoid redundancy since the squared hinge loss functions showed nearly identical results.

\begin{figure}[t]
\centering
\includegraphics[width = 0.7\textwidth]{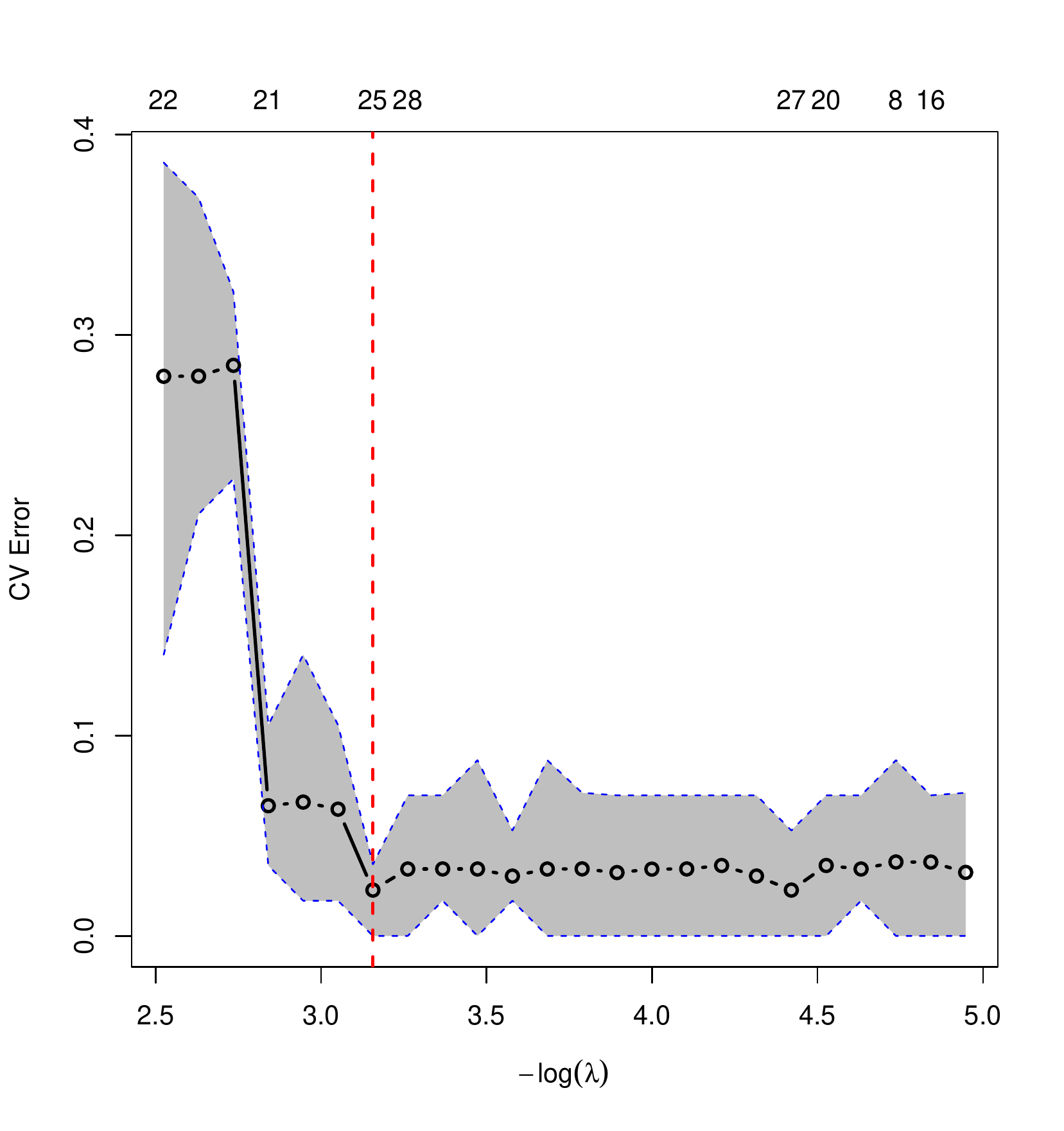}\\
\caption{A cross-validation error rate for different values of $\lambda$. The vertical lines represent the selected $\lambda$ using the strong sequential rule described in Section 3.2. The number at the top of the graph represents the index of the predictor that was first selected for that tuning parameter value. The gray area represents the range of the cross-validated error.}\label{fig:real}
\end{figure}

We randomly split into a training set of 300 observations and a test set of the remaining 269. Ten-fold cross-validation is used to choose the optimal tuning parameter. Figure \ref{fig:real} illustrates the 10-fold cross-validated error rate with respect to the grid of tuning parameter $\lambda$, efficiently computed by the strong sequential rule proposed in Section \ref{ss:ssr}. As shown in Figure \ref{fig:real} $X_{22}, X_{21}$ and $X_{25}$ are selected by our method. 

We also compare the results for different methods considered in Section \ref{sec5}, except COSSO, which does not converge for this particular example. Table \ref{tb:wdbc1} shows the number of selected variables and the test error averaged over forty random splitting replications. One can conclude that the proposed method outperforms all the others because it achieves the smallest test error with the second smallest number of selected variables. SKDA selects the smallest number of variables but yields the worst performance. Finally, we take a closer look at the variable selection result. Table \ref{tb:wdbc2} contains the frequency of each variable selected over forty independent random splittings. The proposed method always selects $X_{21}$ and $X_{22}$ that seem to be informative since these two variables are frequently selected by other methods as well. $X_{28}$ also looks informative for the same reason. The selection pattern between our method and SKDA is generally similar, but $X_{25}$ makes a difference between our method and SKDA.  

\begin{table}[]
	\centering
	\caption{Average test errors  and the number of variables selected for the WBCD. The numbers in parentheses are the corresponding standard deviations. }
	\label{tb:wdbc1}
	\scalebox{0.9}{
		\begin{tabular}{@{}cccccc@{}}
			\toprule
			Methods    &     Logit      &      SKDA      &       RF       &      SCAD      \\ \midrule\midrule
			Test error   & 0.0401(0.0125) & 0.0603(0.0199) & 0.0552(0.0148)  & 0.0454(0.0153) \\
			No. variables &   4.33(1.72)   &   2.93(0.57)   &   11.3(5.60)   &   5.08(1.16)   \\ \bottomrule
		\end{tabular}
	}
\end{table}

\begin{table}[]
	\centering
	\caption{Results of the selection frequency for all predictors in the WBCD data.}
	\label{tb:wdbc2}
	\scalebox{0.9}{
		\begin{tabular}{@{}ccccccccccc@{}}
			\toprule
			$X_{j}$  & Logit & SKDA & RF &  SCAD & $X_{j}$  & Logit & SKDA & RF & SCAD \\ \midrule\midrule
			$X_{1}$  &   1   &  4   & 25 & 0   & $X_{16}$ &   2   &  0   & 1  &  0  \\
			$X_{2}$  &   1   &  6   & 5  & 11  & $X_{17}$ &   0   &  0   & 2  &  0  \\
			$X_{3}$  &   0   &  0   & 26 & 0   & $X_{18}$ &   0   &  0   & 2  &  3  \\
			$X_{4}$  &   0   &  0   & 29 & 0   & $X_{19}$ &   1   &  0   & 1  &  2  \\
			$X_{5}$  &   0   &  0   & 2  & 2   & $X_{20}$ &   1   &  0   & 1  &  1  \\
			$X_{6}$  &   0   &  0   & 7  & 0   & $X_{21}$ &  40   &  35  & 38 & 40  \\
			$X_{7}$  &   0   &  7   & 28 & 8   & $X_{22}$ &  40   &  27  & 18 & 36  \\
			$X_{8}$  &   4   &  9   & 37 & 3   & $X_{23}$ &   2   &  0   & 39 &  0  \\
			$X_{9}$  &   0   &  0   & 1  & 3   & $X_{24}$ &   1   &  0   & 40 &  0  \\
			$X_{10}$ &   2   &  1   & 1  & 0   & $X_{25}$ &  31   &  2   & 6  & 21  \\
			$X_{11}$ &   7   &  1   & 14 & 19  & $X_{26}$ &   0   &  0   & 13 &  0  \\
			$X_{12}$ &   0   &  0   & 1  & 0   & $X_{27}$ &   9   &  0   & 31 &  5  \\
			$X_{13}$ &   0   &  0   & 14 & 1   & $X_{28}$ &  25   &  22  & 40 & 30  \\
			$X_{14}$ &   0   &  0   & 24 & 0   & $X_{29}$ &   6   &  2   & 3  & 14  \\
			$X_{15}$ &   0   &  0   & 1  & 4   & $X_{30}$ &   0   &  1   & 2  &  0  \\ \bottomrule
		\end{tabular}
	}
	
\end{table}

\section{Conclusion} \label{sec7}
In this article, we develop a new nonparametric variable selection method in binary classification. The proposed method is based on learning gradients, which should be zero for non-informative predictors. We develop an efficient algorithm to compute the gradient estimator and establish its variable selection consistency. Both the simulation and the real data example illustrate the promising performance of the proposed method. 

It is noteworthy that our theoretical results are valid only for twice-differentiable loss functions, which excludes some popular loss functions, such as hinge loss and the large-margin unified machine loss \citep{liu2011hard}. This is because the proof relies on the second-order Taylor expansion of the objective function to obtain its risk bound. However, one can extend the results by exploiting alternative tools other than the Taylor expansion. 

The gradient functions of the models (i.e., regression or classification functions) contain useful information for to understand the data generating process, not included in the original function. For example, shape constraints such as monotonicity can be imposed by properly controlling gradients. The gradient of the models can also be used to achieve  dimension reduction of predictors \citep{xia2009adaptive}. This warrants further investigation on gradient learning.

\bibliographystyle{spbasic}      
\bibliography{references}
\section*{Appendix}
\subsection{Proof of Theorem  \ref{thm1}}
Let $(\hat{f},\hat{\bg})$ be our estimator calculated from \eqref{eq:gradient} and $f^*$ be the true classification function defined in \eqref{eq:loss}. Similar to the empirical error in \eqref{eq:def1}, define the expected error
\begin{align} \label{emp}
	\mathcal{E}(\hat{f},\hat{\bg})=\int\int
	\omega_{s}(\bx-\bu)L\left(
	y\left(
	\hat{f}(\bu)+ \hat{\bg}(\bu)(\bx-\bu)\right)\right)d{\Rho}(\bu)d\Rho(\bx,y).
\end{align}
Since
 $$E[\hat{\calE}(\hat{f},\hat{\bg})]=\frac{1}{ns^{p+2}}R(f^*)+\frac{n-1}{n}\mathcal{E}(\hat{f},\hat{\bg}),$$
 the expected value of the empirical error gets close to the expected error if the empirical error is concentrated as $n$ increases. Denote
\begin{align} \label{R1} 
	\mathcal{R}_s&=\int\int\omega_{s}(\bx-\bu)L\left(yf^*(\bx)\right)d{\Rho}(\bx,y)d{\Rho}(\bu),
\end{align}
a weighted version of \eqref{eq:err_loss}. Then the excess error can be defined by $\mathcal{E}(\hat{f},\hat{\bg})-\mathcal{R}_s$.
 The idea behind the proof for Theorem \ref{thm1} is to first bound the $L_{\rho_\bx}^2$ difference by the excess error and then bound the excess error.  Our minimization problem \eqref{eq:gradient} is quite similar to \citet{mukherjee2006estimation} except that it deals with different type of regularization. Typically, we consider the coefficient-based space $\calH_{\alpha}=\left\{f: f(\cdot)=\sum_{i=1}^{n}\alpha_i K(\bx_i,\cdot), \alpha_i \in \mathbb{R} \right\}$ as the candidate functional space, which depends on $\{\bx_i\}_{i=1}^n$. One difficulty is that we cannot define $J_0(f^*)$ and $J_1(\bg^*)$ because $f^*$ and $g_1^*, ..., g_p^*$ may not be included in $\calH_{\alpha}$. To avoid this difficulty, we introduce an intermediate learning algorithm as a bridge for error analysis to use standard empirical processes and approximation theory.
Denote $\calH_K^{p+1}=\left\{(f, g_1, ..., g_p): f \in \calH_{K}, g_j \in \calH_{K} \right\}$ and $\calH_{\alpha}^{p}=\left\{(g_1, ..., g_p):  g_j \in \calH_{\alpha}\right\}$.

We introduce intermediate learning algorithms to establish consistency.

\begin{align} \label{int}
	(\bar{f}, \bar{\bg})= \argmin_{(f,\bg)\in \calH_{K}^{p+1}} \hat{\calE}(\bar{f},\bar{\bg})+\frac{\zeta}{2}\left(\theta_0\|f\|_{\calH_K}^2+\sum_{\ell=1}^p \theta_{\ell}\|g_{\ell}\|_{\calH_K}^2\right),
\end{align}
where $\zeta>0$ is a tuning parameter. 
In \eqref{int}, we apply adaptive tuning parameters to the penalty term of the original gradient learning in \citet{mukherjee2006estimation}.
By the representer theorem, $\bar{f}$ and $\bar{g}_1$, ..., $\bar{g}_p$ in \eqref{int} have closed solution with the form
$$
\bar{f}(\bx)=\sum_{i=1}^{n}\bar{\alpha}_{i0}K(\bx,\bx_i), \quad \bar{g}_{\ell}(\bx_{i})=\sum_{i=1}^{n}\bar{\alpha}_{i\ell}K(\bx,\bx_i), \qquad \text{ for } \ell=1, ..., p. 
$$

	\begin{lemma}\label{lem_pen} Under A3, 
	\begin{align}
		J_0(\bar{f}_0) \le \frac{c_L}{2\zeta^2\theta_0n}\hat{\calE}(\bar{f},\bar{\bg}) , \quad 		J_1(\bar{g}_\ell) \le \frac{\sqrt{c_L} D}{\zeta\theta_\ell \sqrt{n}} \sqrt{\hat{\calE}(\bar{f},\bar{\bg})}, \qquad \ell=1, ..., p.
	\end{align}

	\end{lemma}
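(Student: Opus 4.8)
The plan is to read the two bounds off the first-order optimality conditions of the intermediate problem \eqref{int}, closing the loop with the self-bounding property of the loss supplied by A3, namely $\{L'(x)\}^2\le c_L L(x)$; this inequality is precisely what lets the penalty of the minimizer $(\bar f,\bar\bg)$ be controlled by the \emph{same} minimizer's empirical error $\hat\calE(\bar f,\bar\bg)$. Because $L$ is convex and differentiable under A3, the objective in \eqref{int} is strictly convex and Fr\'echet differentiable on $\calH_K^{p+1}$, so its unique minimizer is characterized by stationarity (only first-order differentiability is needed here). Writing $\bar m_{ij}=\bar f(\bx_j)+\bar\bg(\bx_j)^\T(\bx_i-\bx_j)$ and using the reproducing identity $h(\bx_j)=\langle h,K(\cdot,\bx_j)\rangle_{\calH_K}$, differentiating in the $f$- and $g_\ell$-directions gives $\zeta\theta_0\bar f=-\frac{1}{n^2}\sum_{i,j}\omega_{s}(\bx_i-\bx_j)L'(y_i\bar m_{ij})y_iK(\cdot,\bx_j)$ and $\zeta\theta_\ell\bar g_\ell=-\frac{1}{n^2}\sum_{i,j}\omega_{s}(\bx_i-\bx_j)L'(y_i\bar m_{ij})y_i\delta_{ij\ell}K(\cdot,\bx_j)$. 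These two identities expose the representer coefficients and are the backbone of the proof.

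I would then bound the coefficient norms directly. Matching coefficients in the $K(\cdot,\bx_j)$ expansion gives $\bar\alpha_{j0}=-\frac{1}{\zeta\theta_0 n^2}\sum_i\omega_{s}(\bx_i-\bx_j)L'(y_i\bar m_{ij})y_i$, with the analogous formula for $\bar\alpha_{j\ell}$ carrying the extra factor $\delta_{ij\ell}$, bounded via $|\delta_{ij\ell}|\le\|\bx_i-\bx_j\|\le D$. Squaring, applying the Cauchy--Schwarz inequality to the inner sum over $i$ against the weights $\omega_{s}(\bx_i-\bx_j)$, and then invoking $\{L'\}^2\le c_L L$ from A3 yields $\bar\alpha_{j0}^2\le \frac{c_L}{\zeta^2\theta_0^2 n^4}\big(\sum_i\omega_{s}(\bx_i-\bx_j)\big)\big(\sum_i\omega_{s}(\bx_i-\bx_j)L(y_i\bar m_{ij})\big)$. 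Summing over $j$ and recognizing $\frac{1}{n^2}\sum_{i,j}\omega_{s}(\bx_i-\bx_j)L(y_i\bar m_{ij})=\hat\calE(\bar f,\bar\bg)$, once the local kernel mass $\sum_i\omega_{s}(\bx_i-\bx_j)$ is controlled at the order $n$, one obtains $\|\bar\balpha_0\|_2^2\le \frac{c_L}{\zeta^2\theta_0^2 n}\hat\calE(\bar f,\bar\bg)$; since $J_0(\bar f)=\frac{\theta_0}{2}\|\bar\balpha_0\|_2^2$, the first assertion follows. The computation for $\bar g_\ell$ is identical up to the $D^2$ factor, but now $J(\bar g_\ell)=\|\bar\balpha_\ell\|_2$ is the norm rather than its square, so taking square roots produces the $\sqrt{\hat\calE(\bar f,\bar\bg)}$ form of the second assertion with the advertised $\frac{\sqrt{c_L}\,D}{\zeta\theta_\ell\sqrt n}$ prefactor. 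Alternatively, A4 ($K(\bx,\bx)\le1$, hence $\|K(\cdot,\bx_j)\|_{\calH_K}\le1$ and $\|\bar\balpha_\ell\|_2\le\|\bar g_\ell\|_{\calH_K}$) lets the same estimates be carried out directly in the Hilbert norm.

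The step I expect to be the main obstacle is the quantitative control of the weighted local count $\sum_i\omega_{s}(\bx_i-\bx_j)$: this is what determines whether the prefactors collapse to exactly $1/n$ and $1/\sqrt n$, and tracking its dependence on the bandwidth $s$ (through $\omega_{s}\propto s^{-(p+2)}$) is the delicate bookkeeping behind these constants. Two smaller points must also be checked: that strict convexity of the objective in \eqref{int} makes the stationarity conditions not merely necessary but characterizing, so the displayed coefficient formulas genuinely describe $(\bar f,\bar\bg)$; and that the coefficients produced by stationarity may be inserted into the infimum defining $J$ in \eqref{glpen}, giving $J(\bar g_\ell)\le\|\bar\balpha_\ell\|_2$, which is an inequality in the needed direction. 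Once the local-mass estimate is in hand, both bounds drop out of the two stationarity identities with no further machinery.
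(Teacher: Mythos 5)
Your proposal matches the paper's proof essentially step for step: stationarity of \eqref{int}, reading off the representer coefficients $\bar\alpha_{j\ell}$, bounding $|\delta_{ij\ell}|\le D$, applying Cauchy--Schwarz over $i$ together with the self-bounding property $\{L'\}^2\le c_L L$ from A3, and summing over $j$ to recognize $\hat{\calE}(\bar{f},\bar{\bg})$, with $J(\bar g_\ell)\le\|\bar\balpha_\ell\|_2$ from the infimum in \eqref{glpen}. The only differences are cosmetic --- you obtain the coefficients from the functional stationarity identity rather than the paper's ``assume $\bK$ invertible'' device, and you explicitly flag the control of the local kernel mass $\sum_i\omega_s(\bx_i-\bx_j)$, which the paper's H\"{o}lder step absorbs silently --- so this is the same argument.
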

\begin{proof}
	
Taking derivative with respect to $\bar{\balpha}_{\ell}=(\bar\alpha_{1\ell }, ..., \bar\alpha_{n\ell})^\T$ to \eqref{int}, we have
$$\zeta\theta_{\ell}\bK\bar{\balpha}_{\ell}+ \frac{1}{n^2}\sum_{i=1}^{n}\sum_{j=1}^{n} \omega_{s}(\bx_i-\bx_j) L'\left(y_{i}\left(\bar{f}\left(\bx_{j}\right)+\bar{\bg}\left(\bx_{j}\right)^\T \left(\bx_{i}-\bx_{j}\right)\right)\right)\bk_{j}(x_{i\ell}-x_{j\ell})=0.$$

Without loss of generality, we assume that $\bK$ is invertible. Then we can solve for $\bar{\alpha}_{j\ell}$ as 
\begin{align}
\zeta\theta_{\ell}\bar{\alpha}_{j\ell}=- \frac{1}{n^2}\sum_{i=1}^{n} \omega_{s}(\bx_i-\bx_j) L'\left(y_{i}\left(\bar{f}\left(\bx_{j}\right)+\bar{\bg}\left(\bx_{j}\right)^\T \left(\bx_{i}-\bx_{j}\right)\right)\right)(x_{i\ell}-x_{j\ell}).	
\end{align}

Assuming $A3$, then by the H\"{o}lder inequality, we have
\begin{align*}
\zeta^2\theta_{\ell}^2\|\bar{\balpha}_{\ell}\|_2^2&= \sum_{j=1}^n\left[-\frac{1}{n^2}\sum_{i=1}^{n} \omega_{s}(\bx_i-\bx_j) L'\left(y_{i}\left(\bar{f}\left(\bx_{j}\right)+\bar{\bg}\left(\bx_{j}\right)^\T \left(\bx_{i}-\bx_{j}\right)\right)\right)(x_{i\ell}-x_{t\ell})\right]^2\\
&\le \frac{1}{n}\left[\frac{D^2}{n^2}\sum_{j=1}^n\sum_{i=1}^{n} \omega_{s}(\bx_i-\bx_j) (-L'\left(y_{i}\left(\bar{f}\left(\bx_{j}\right)+\bar{\bg}\left(\bx_{j}\right)^\T \left(\bx_{i}-\bx_{j}\right)\right)\right)^2\right]\\
&\le \frac{D^2}{n}\left[\frac{c_L}{n^2}\sum_{j=1}^n\sum_{i=1}^{n} \omega_{s}(\bx_i-\bx_j) L\left(y_{i}\left(\bar{f}\left(\bx_{j}\right)+\bar{\bg}\left(\bx_{j}\right)^\T \left(\bx_{i}-\bx_{j}\right)\right)\right)\right]\\
&= \frac{c_{L}D^2}{n} \hat{\calE}(\bar{f},\bar{\bg}) .
\end{align*}

Therefore we have $J_1(\bar{g}_\ell)\le \frac{ \sqrt{c_L} D}{\zeta \theta_\ell \sqrt{n}}\sqrt{\hat{\calE}(\bar{f},\bar{\bg})}$. Similarly we can show that $	J_0(\bar{f}_0) \le \frac{c_L}{2\zeta^2\theta_0 n}\hat{\calE}(\bar{f},\bar{\bg}) $. 	
\end{proof}
We now decompose the excess error as follows.


\begin{lemma} The following inequality holds for any $0<\epsilon\le 1$,
$$
\calE(\hat{f},\hat{\bg})-\calR_{s} +  	\lambda  J_0(f)+ \lambda  J_1(\bg)\le \mathscr{S}_1 +\left[1+\lambda c_L  \left( \frac{1}{2\theta_0\zeta^2n}  + \frac{Dp\epsilon}{\zeta\sqrt{n}}\right)\right]\mathscr{S}_2
+\mathscr{A}(\lambda ),
$$
where
$$
\mathscr{S}_2=\hat{\calE}(\hat{f},\hat{\bg})-\calE(\hat{f},\hat{\bg}) ,
$$
$$
\mathscr{S}_2=\hat{\calE}(\bar{f},\bar{\bg})-\calE(\bar{f},\bar{\bg}) ,
$$
and
\begin{align*}
\mathscr{A}(\lambda )= \left[1+\lambda c_L  \left( \frac{1}{2\theta_0\zeta^2n}  + \frac{Dp\epsilon}{\zeta\sqrt{n}}\right)\right]& \left(\calE(\bar{f},\bar{\bg})-\calR_s\right)\\
&+\lambda c_L  \left( \frac{1}{2\theta_0\zeta^2n}  + \frac{Dp\epsilon}{\zeta\sqrt{n}}\right)\calR_{s}+\frac{\lambda Dp}{\epsilon\zeta\sqrt{n}}.
\end{align*}

\end{lemma}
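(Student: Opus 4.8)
The plan is to run the standard \emph{basic inequality} argument of penalized empirical risk minimization, using the intermediate estimator $(\bar f,\bar\bg)$ from \eqref{int} as the comparison point and Lemma \ref{lem_pen} as the bridge that converts the coefficient-based penalties $J_0,J_1$ evaluated at $(\bar f,\bar\bg)$ back into the empirical error $\hat\calE(\bar f,\bar\bg)$. The target is to split the regularized excess error $\calE(\hat f,\hat\bg)-\calR_s+\lambda J_0(\hat f)+\lambda J_1(\hat\bg)$ into two \emph{sample-error} (empirical-process) pieces, $\mathscr{S}_1=\calE(\hat f,\hat\bg)-\hat\calE(\hat f,\hat\bg)$ for the estimator and $\mathscr{S}_2=\hat\calE(\bar f,\bar\bg)-\calE(\bar f,\bar\bg)$ for the intermediate estimator, plus a deterministic \emph{approximation/drift} term $\mathscr{A}(\lambda)$ that collects everything governed by $\calE(\bar f,\bar\bg)-\calR_s$, by $\calR_s$, and by the leftover constants.

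First I would invoke the optimality of $(\hat f,\hat\bg)$ for \eqref{eq:gradient}: since it minimizes $\hat\calE(\cdot)+\lambda J_0(\cdot)+\lambda J_1(\cdot)$, comparison with $(\bar f,\bar\bg)$ gives $\hat\calE(\hat f,\hat\bg)+\lambda J_0(\hat f)+\lambda J_1(\hat\bg)\le \hat\calE(\bar f,\bar\bg)+\lambda J_0(\bar f)+\lambda J_1(\bar\bg)$. Subtracting $\calR_s$ and inserting $\pm\calE(\hat f,\hat\bg)$ on the left and $\pm\calE(\bar f,\bar\bg)$ on the right exposes the two sample-error terms, leaving the intermediate inequality $\calE(\hat f,\hat\bg)-\calR_s+\lambda J_0(\hat f)+\lambda J_1(\hat\bg)\le \mathscr{S}_1+\mathscr{S}_2+\bigl(\calE(\bar f,\bar\bg)-\calR_s\bigr)+\lambda\bigl[J_0(\bar f)+J_1(\bar\bg)\bigr]$.

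The next step is to dispose of $\lambda[J_0(\bar f)+J_1(\bar\bg)]$ through Lemma \ref{lem_pen}, which bounds $J_0(\bar f)$ by $\tfrac{c_L}{2\zeta^2\theta_0 n}\hat\calE(\bar f,\bar\bg)$ (linear in $\hat\calE$) and $J_1(\bar\bg)=\sum_{\ell=1}^p\theta_\ell J(\bar g_\ell)$ by $\tfrac{\sqrt{c_L}Dp}{\zeta\sqrt n}\sqrt{\hat\calE(\bar f,\bar\bg)}$ (only a square root in $\hat\calE$). I would then linearize the square root by the weighted arithmetic--geometric inequality $\sqrt{x}\le \epsilon x+\tfrac{1}{4\epsilon}$, which is exactly where the free parameter $0<\epsilon\le1$ enters, turning the $J_1$ bound into a multiple $\lambda c_L\tfrac{Dp\epsilon}{\zeta\sqrt n}$ of $\hat\calE(\bar f,\bar\bg)$ plus an additive constant proportional to $\tfrac{\lambda Dp}{\epsilon\zeta\sqrt n}$. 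Writing $\hat\calE(\bar f,\bar\bg)=\mathscr{S}_2+\bigl(\calE(\bar f,\bar\bg)-\calR_s\bigr)+\calR_s$ and factoring out the common coefficient $\lambda c_L\bigl(\tfrac{1}{2\theta_0\zeta^2 n}+\tfrac{Dp\epsilon}{\zeta\sqrt n}\bigr)$ then produces the factor $1+\lambda c_L(\cdots)$ in front of $\mathscr{S}_2$ and precisely the three summands defining $\mathscr{A}(\lambda)$.

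I expect the main obstacle to be the careful bookkeeping around the mismatch of the two regularizers: the estimator $(\hat f,\hat\bg)$ is tuned by the coefficient-based group-lasso penalty with parameter $\lambda$, whereas the comparison estimator $(\bar f,\bar\bg)$ is tuned by the RKHS-norm penalty with parameter $\zeta$, so the two objectives are never directly comparable except through the optimality inequality combined with Lemma \ref{lem_pen}. Tracking the $\sqrt{c_L}$-versus-$c_L$ factors and the powers of $\epsilon$ so that they land on exactly the stated coefficients, while keeping the square-root linearization tight enough that the residual constant stays of order $\tfrac{\lambda Dp}{\epsilon\zeta\sqrt n}$, is the delicate part; the remainder is routine algebraic rearrangement.
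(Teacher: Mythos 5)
Your proposal is correct and follows essentially the same route as the paper's own proof: the basic optimality inequality against the intermediate estimator $(\bar f,\bar\bg)$, conversion of $\lambda J_0(\bar f)+\lambda J_1(\bar\bg)$ via Lemma~\ref{lem_pen}, linearization of the square root with the free parameter $\epsilon$ (applied to $c_L\hat\calE$ so the coefficient becomes $\lambda c_L Dp\epsilon/(\zeta\sqrt n)$, exactly as in the paper's chain of inequalities), and the final recentering that splits $\hat\calE(\bar f,\bar\bg)$ into $\mathscr{S}_2+\bigl(\calE(\bar f,\bar\bg)-\calR_s\bigr)+\calR_s$ while adding $\calE(\hat f,\hat\bg)-\hat\calE(\hat f,\hat\bg)-\calR_s$ to both sides. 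The only cosmetic deviations are your sharper constant $\tfrac{1}{4\epsilon}$ in the AM--GM step (the paper uses $\sqrt{x}\le\epsilon x+1/\epsilon$, and either suffices for the stated bound) and your correct reading of $\mathscr{S}_1=\calE(\hat f,\hat\bg)-\hat\calE(\hat f,\hat\bg)$, which the lemma statement mislabels as a second $\mathscr{S}_2$.
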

\begin{proof}
Note that
\begin{align*}	
\hat{\calE}(\hat{f},\hat{\bg}) +  	\lambda   J_0(f)+ \lambda  J_1(\bg) 
&=\hat{\calE}(\hat{f},\hat{\bg}) +	\lambda   J_0(\hat{f})+ \lambda  J_1(\hat{\bg})\\
&\le\hat{\calE}(\bar{f},\bar{\bg}) +	\lambda   J_0(\bar{f})+ \lambda  J_1(\bar{\bg})\\
&\le\hat{\calE}(\bar{f},\bar{\bg}) +	\lambda \frac{c_L}{2\zeta^2 \theta_0 n}\hat{\calE}(\bar{f},\bar{\bg})+
 \frac{\lambda  \sqrt{c_L} Dp}{\zeta \sqrt{n}}\sqrt{\hat{\calE}(\bar{f},\bar{\bg})}\\
&\le \left[1+\lambda c_L \left( \frac{1}{2\theta_0\zeta^2n}  + \frac{Dp\epsilon}{\zeta\sqrt{n}}\right)\right]\hat{\calE}(\bar{f},\bar{\bg})+ \frac{\lambda Dp}{\epsilon\zeta\sqrt{n}}\\
&\le \left[1+\lambda c_L  \left( \frac{1}{2\theta_0\zeta^2n}  + \frac{Dp\epsilon}{\zeta\sqrt{n}}\right)\right](\hat{\calE}(\bar{f},\bar{\bg})- \calE(\bar{f},\bar{\bg}))\\ 
&+\left[1+\lambda c_L  \left( \frac{1}{2\theta_0\zeta^2n}  + \frac{Dp\epsilon}{\zeta\sqrt{n}}\right)\right]\calE(\bar{f},\bar{\bg})+ \frac{\lambda Dp}{\epsilon\zeta\sqrt{n}}.
\end{align*}

The first inequality follows from the definition of $(\hat{f},\hat{\bg})$, the second inequality is from Lemma \ref{lem_pen}, the third inequality follows from the fact that $\sqrt{x} \le \epsilon x+1/\epsilon$ for any $\epsilon >0$. Adding $\calE(\hat{f},\hat{\bg}) -\hat{\calE}(\hat{f},\hat{\bg}) -\calR_{s}$ to both sides of inequality gives the desired results. 
\end{proof}

Without loss of generality, assume that the intermediate solution $(\bar{f}, \bar{\bg})$ in \eqref{int} is in the functional subspace 
$$\calF_{r_1}:=\left\{(f,\bg) \in \calH_{K}^{p+1}:
\theta_0\|f\|_{\calH_K}^2+\sum_{\ell=1}^p\sum_{\ell=1}^p\theta_{\ell}\|g_\ell\|_{\calH_K}^2\le r_1^2 \right\}$$ for some $r_1>0$. 
Similarly, define $$\calF_{r_2} :=\left\{f \in \calH_{\alpha}: \theta_0 \|f\|_{\calH_K}^2\le r_2^2\right\} \quad \text{ and } \quad \calF_{r_3}:=\left\{\bg\in \calH_{\alpha}^p := \sum_{\ell=1}^p \theta_{\ell}\|g_{\ell}\|_{\calH_K}\le r_3\right\}.$$ 
 We can control functions in these three functional spaces via a single radius by defining $R=\max\left\{r_1, r_2, r_3\right\}$.


In other words, for $(\bar{f},\bar{\bg}) \in \calF_{r_1}$, $\hat{f} \in \calF_{r_2}$, and $\hat{\bg} \in \calF_{r_3}$, followings hold simultaneously:
\begin{align*}
	\theta_0\|\bar{f}\|_{\calH_K}^2+\sum_{\ell=1}^p \theta_{\ell}  \|\bar{g}_\ell\|_{\calH_K}^2 &\le R^2\\\
	\theta_0\|\balpha_{0}\| &\le R^2\\
	\sum_{\ell=1}^p\sum_{\ell=1}^p\theta_\ell\|\balpha_{\ell}\|_2 &\le R.
\end{align*}

The quantity $\mathscr{S}_1$ and  $\mathscr{S}_2$ can be bounded by
\begin{align*}
S_1(R,\lambda )&= \sup_{f \in \calF_{r_{2}}, \bg \in \calF_{r_{3} }}|\calE({f},{\bg}) -\hat{\calE}({f},{\bg})|\\
S_2(R,\zeta)&= \sup_{(f,\bg) \in \calF_{r_{1}}}  |\calE({f},{\bg}) -\hat{\calE}({f},{\bg})|,
\end{align*}
respectively. That is, $\mathscr{S}_1 \le S(R,\lambda )$ and $\mathscr{S}_2 \le S(R,\zeta)$. By Lemma 25 of \citet{mukherjee2006estimation}, for every $R>0$, we have both

\begin{align}
P\left(|S_1(R,\lambda )-E(S_1(R,\lambda )) \ge \varepsilon   \right) \le 2 \exp\left(-\frac{n\varepsilon^2s^{2(p+2)}}{2M_R^2}\right) \label{eq_mc}\\
P\left(|S_2(R,\zeta)-E(S_2(R,\zeta)) \ge \varepsilon   \right) \le 2 \exp\left(-\frac{n\varepsilon^2s^{2(p+2)}}{2M_R^2}\right) \label{eq_mc2}.
\end{align}

\begin{lemma} \label{lem_esr}For every $R>0$,
$$
E\left(S(R,\lambda )\right)\le \frac{8L_R\left( R\left(\frac{1}{\theta_0}+\frac{2D}{c_4}\right)+L(0)\right)}{s^{p+2}\sqrt{n}}+\frac{2M_R}{ns^{p+2}},
$$
where
\begin{align*}
	L_R & = \max\left\{L'((1+D)R), L'(-(1+D)R) \right\},\\
	M_R & = \max\left\{L((1+D)R), L(-(1+D)R) \right\}.
\end{align*}
\end{lemma}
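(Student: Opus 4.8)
The quantity $S(R,\lambda)$ is the supremum, over the radius-$R$ coefficient balls $\calF_{r_2}\times\calF_{r_3}$ (with $r_2,r_3\le R$), of the deviation between the empirical error $\hat{\calE}$ and its population counterpart $\calE$. The plan is the classical symmetrization--contraction route for empirical processes, with care taken for the double-sum (V-statistic) structure of $\hat{\calE}$ and for the smoothing-kernel scaling $\omega_s\le s^{-(p+2)}$, which is what forces the $s^{p+2}$ denominators.

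First I would isolate the diagonal of the double sum. Writing $\phi_{f,\bg}(z_i,z_j)=\omega_s(\bx_i-\bx_j)L(y_i(f(\bx_j)+\bg(\bx_j)^\T(\bx_i-\bx_j)))$, the V-statistic $\hat{\calE}=n^{-2}\sum_{i,j}\phi_{f,\bg}$ splits into its off-diagonal ($i\ne j$) U-statistic part and the diagonal ($i=j$) part. On the diagonal $\bx_i-\bx_j=\mathbf 0$, so $\omega_s(0)=s^{-(p+2)}$ and the loss argument reduces to $y_if(\bx_i)$, whose magnitude is at most $(1+D)R$ on the balls by the reproducing property together with $K\le 1$ (A4); hence the diagonal is bounded uniformly by $M_R/(ns^{p+2})$. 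Since $\Rho$ has a density (A2), the product measure puts no mass on the diagonal, so $\calE$ has no matching term, and a constant-order factor from centering yields the $2M_R/(ns^{p+2})$ term of the bound.

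For the off-diagonal part I would apply symmetrization to replace the centered U-process by a Rademacher average, then invoke the Ledoux--Talagrand contraction principle to peel off the loss. By convexity and monotonicity of $L'$ (A3), the Lipschitz constant of $L$ on the attainable range $[-(1+D)R,(1+D)R]$ of its arguments is exactly $L_R=\max\{L'((1+D)R),L'(-(1+D)R)\}$, so contraction costs a factor $L_R$. What remains is the Rademacher complexity of the linear index class $(f,\bg)\mapsto f(\bx_j)+\bg(\bx_j)^\T(\bx_i-\bx_j)$ weighted by $\omega_s$. I bound the two pieces using the reproducing property and $K\le 1$ (A4): $|f(\bx_j)|$ is controlled by $\|f\|_{\calH_K}$, hence by $R$ and $\theta_0$ through the constraint defining $\calF_{r_2}$, while $|\bg(\bx_j)^\T(\bx_i-\bx_j)|\le D\sum_{\ell}\|g_\ell\|_{\calH_K}\le DR/c_4$ using $\theta_\ell\ge c_4$ (A5) and the diameter bound $D$. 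Collecting these, together with $\omega_s\le s^{-(p+2)}$, the $L(0)$ contribution from evaluating the process at the zero function, and the usual $n^{-1/2}$ Rademacher decay, assembles the first term $8L_R(R(1/\theta_0+2D/c_4)+L(0))/(s^{p+2}\sqrt n)$.

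The main obstacle is precisely the pair (V-statistic) structure: the summands $\phi_{f,\bg}(z_i,z_j)$ are not independent, since each sample $z_i$ appears in many pairs, so the elementary symmetrization lemma does not apply verbatim. I would resolve this through the standard decoupling and first-order Hoeffding projection for U-processes--conditioning on one index and symmetrizing the other--which reduces the leading contribution to an ordinary Rademacher complexity while the higher-order chaos is of lower order and is absorbed into the $1/n$ terms already captured by the diagonal estimate. Checking that this reduction holds uniformly over the balls $\calF_{r_2}\times\calF_{r_3}$, and that the contraction range is genuinely $[-(1+D)R,(1+D)R]$, are the steps demanding the most care; the remaining work is routine bookkeeping of the constants $\theta_0,c_4,D$ and the factor $s^{-(p+2)}$, and the concentration of $S(R,\lambda)$ around this expectation is already supplied by \eqref{eq_mc}.
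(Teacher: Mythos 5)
Your proposal is correct in substance and shares all of the paper's core ingredients: separating the diagonal $i=j$ terms to produce the $2M_R/(ns^{p+2})$ remainder, symmetrization into a Rademacher average, contraction with Lipschitz constant $L_R$ on the attainable range $[-(1+D)R,(1+D)R]$, and RKHS-norm bookkeeping via $K\le 1$, $\theta_\ell\ge c_4$ and the diameter $D$, which yields exactly the $R\left(\frac{1}{\theta_0}+\frac{2D}{c_4}\right)+L(0)$ complexity and the $s^{-(p+2)}$ scaling. The one place you genuinely diverge is the step you yourself flag as the obstacle: taming the two-index dependence. You propose decoupling plus a first-order Hoeffding projection for U-processes, asserting that the degenerate second-order part is $O(1/n)$ and absorbable; making that rigorous requires a bound on a degenerate U-process \emph{uniformly} over $\calF_{r_2}\times\calF_{r_3}$ (chaining or de la Pe\~na--Gin\'e type moment inequalities), which is nontrivial work your sketch does not carry out. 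The paper sidesteps U-process theory entirely with an elementary triangle-inequality split of $E(S(R,\lambda))$ into three pieces: (i) the deviation of $\calE(f,\bg)$ from $\frac{1}{n}\sum_{j}E_{(\bx,y)}\xi(\bx,y,\bx_j)$, a one-sample empirical process in the $\bx_j$'s with the $(\bx,y)$-expectation frozen; (ii) for each $j$, the deviation of $\frac{1}{n-1}\sum_{i\ne j}\xi(\bx_i,y_i,\bx_j)$ from its expectation, after inserting $\sup_{\bu\in\calX}$ so that the inner quantity is again an ordinary one-sample empirical process over a class enlarged by the index $\bu$; and (iii) the diagonal/reweighting remainder $S_3\le 2M_R/(ns^{p+2})$. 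Pieces (i) and (ii) are then symmetrized and contracted exactly as you describe, each contributing $4L_R\left(R\left(\frac{1}{\theta_0}+\frac{2D}{c_4}\right)+L(0)\right)/(s^{p+2}\sqrt{n})$, whence the factor $8$. So both routes can be completed, but yours trades the paper's cheap conditioning-plus-$\sup_{\bu}$ device for a delicate degenerate-part estimate that buys nothing extra here; if you pursue your version, the uniform control of the second-order Hoeffding term is the piece you must actually supply rather than assert.
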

\begin{proof}
Denote ${\xi}(\bx,y,\bu)=w(\bx-\bu)L\left(y\left(f(\bu)+\bg(\bu)^\T(\bx-\bu)\right)\right).$ Then
\begin{align*}
	\calE(f,\bg)&=E_\bu E_{(\bx,y)}{\xi}(\bx,y,\bu)\\
	\hat{\calE}(f,\bg)&=\frac{1}{n^2}\sumi\sumj{\xi}(\bx_i,y,\bx_j).
\end{align*}
\begin{align*}
E(S(R,\lambda ))&\le \underset{(f,\bg)\in\calF_R}{\sup}\left|
	\calE(f,\bg)-\frac{1}{n}\sumj E_{(\bx,y)}{\xi}(\bx,y,\bx_j)
	\right|\\
	&\quad +\underset{(f,\bg)\in\calF_R}{\sup}\left|\frac{1}{n}\sumj E_{(\bx,y)}{\xi}(\bx,y,\bx_j)-\hat{\calE}(f,\bg)\right|\\
	&\le E_{(\bx,y)}\underset{(f,\bg)\in\calF_R}{\sup}\left|E_\bu{\xi}(\bx,y,\bu)-\frac{1}{n}\sumj{\xi}(\bx,y,\bx_j)\right|\\
	&\quad +\frac{1}{n}\sumj\underset{(f,\bg)\in\calF_R}{\sup}\underset{\bu\in\bX}{\sup}\left|
	E_{(\bx,y)}{\xi}(\bx,y,\bu)-\frac{1}{n-1}\sum_{i=1, i\neq j}^{n}{\xi}(\bx_i,y_i,\bu)
	\right|\\
	&\quad +\frac{1}{n}\sumj\left(\frac{1}{n}{\xi}(\bx_j,y_j,\bx_j)+\frac{1}{n(n-1)}\sum_{i=1, i\neq j}^{n}{\xi}(\bx_i,y_i,\bx_j)
	\right)\\&:=S_1+S_2+S_3
\end{align*}
Let $\epsilon_i$, $i=1,...,n$ be independent Rademacher variables. Denote
$$
G_{(\bx,y)}=\left\{h(\bu)=y\left[f(\bu)+\bg(\bu)^\T(\bx-\bu): (f,\bg)\in \calF_R
\right]\right\}, \qquad \forall (\bx,y) \in (\bX,Y)
$$
By Assumption A4, we see that $\theta_\ell > c_4$ for any $\ell$. Then it follows that
\begin{align*}
	ES_1&=E_{(\bx,y)}E\underset{h\in G_{(\bx,y)}}{\sup}\left|
	E_\bu\left[w(\bx-\bu)L(h(\bu))-\frac{1}{n}\sumj w(\bx-\bx_j)L(h(\bx_j))
	\right]\right|\\
	&\le 2\underset{(\bx,y) }{\sup}E\underset{h\in G_{(\bx,y)}}{\sup}\left|
	\frac{1}{n}\sumj \epsilon_j w(\bx-\bx_j)L(h(\bx_j))\right|\\
	&\le \frac{4}{s^{p+2}}\underset{(\bx,y) }{\sup}E\underset{h\in G_{(\bx,y)}}{\sup}
	\left|\frac{1}{n}\sumj\epsilon_j L((h(\bx_j)))
	\right|\\
	&\le \frac{4L_R}{s^{p+2}}\left(\underset{(\bx,y) }{\sup}E\underset{h\in G_{(\bx,y)}}{\sup} \left|\frac{1}{n}\sumj\epsilon_jh(\bx_j)\right|+\frac{L(0)}{\sqrt{n}}\right)\\
	&\le \frac{4L_R}{s^{p+2}}\left( \frac{R}{\theta_0}
	+\frac{ R}{c_4} \underset{\bx \in \bX}{\sup}E\left|\frac{1}{n}\sumj\epsilon_j\|\bx-\bx_j\|
	\right|+\frac{L(0)}{\sqrt{n}}\right)\\
	&\le \frac{4L_R\left(R \left(\frac{1}{\theta_0}+\frac{2D}{c_4}\right)+L(0)\right)}{s^{p+2}\sqrt{n}}.
\end{align*}
The fourth inequality follows from H\"{o}lder inequality and Assumption A1 and A4. 
Similarly, we can verify
$$
ES_2\le \frac{4L_R\left(R \left(\frac{1}{\theta_0}+\frac{2D}{c_4}\right)+L(0)\right)}{s^{p+2}\sqrt{n-1}}.
$$
Obviously, $S_3\le \frac{2M_R}{ns^{p+2}}$. Combining the estimates for $S_1, S_2$, and $S_3$ completes the proof.
\end{proof}
Similarly, we can verify 
$$
E\left(S_2(R,\lambda )\right)\le \frac{8L_R\left( R\left(\frac{1}{\sqrt{\theta_0}}+\frac{2D}{\sqrt{c_4}}\right)+L(0)\right)}{s^{p+2}\sqrt{n}}+\frac{2M_R}{ns^{p+2}}.
$$

Now we proof Theorem \ref{thm1}.
\begin{proof}[Proof of Theorem \ref{thm1}]
Together with \eqref{eq_mc}, \eqref{eq_mc2} and Lemma \ref{lem_esr} , we have with probability at least $1-\eta$, 
\begin{align} \label{s1s2}
\mathscr{S}_1 \le c_{s_1}\frac{L_RR+M_R\log\frac{2}{\eta}    }{\sqrt{n}s^{p+2}},\qquad
\mathscr{S}_2 \le c_{s_2}\frac{L_RR+M_R\log\frac{2}{\eta}    }{\sqrt{n}s^{p+2}}
\end{align}
for some $c_{s_1}, c_{s_2}>0$.

By Theorem 23 in \citet{mukherjee2006estimation}, if $(f^*, \nabla f^*) \in \calH_K^{p+1}$ and $(\bar{f},\bar{\bg}) \in \calF_{r_1}$, then with probability at least $1-\eta$, 
$$
\calE(\bar{f},\bar{\bg})-\calR_s \le \bar{c}\left(\frac{L_{r_1}r_1+M_{r_1}\log{\frac{2}{\eta}}}{\sqrt{n}{s^{p+2}}}+s^2+\zeta
\right),
$$
where
$$
r_1=\bar{c}\left\{1+\frac{s^{2}}{\zeta}+\left(\frac{L_{\zeta, s}}{\sqrt{\zeta s^{p+2}}}+M_{\zeta, s} \log \frac{2}{\eta}\right) \frac{1}{\sqrt{n \zeta s^{p+2}}}\right\}^{1 / 2}
$$
with some $\bar{c}\ge 1$, $L_{\zeta,s}=L_{\sqrt{2L(0)/\zeta s^{p+2}}}$, and $M_{\zeta,s}=M_{\sqrt{2L(0)/\zeta s^{p+2}}}$. By the definition of $\calR_{s}$ in \eqref{R1}, $\calR_{s} =\mathcal{O}(s^{-(p+2)})$. Take $\epsilon=s^{\frac{p+2}{2}}$, $\lambda =\zeta$, and $r_1 = R$. Then we have
\begin{align} \label{alambda}
\mathscr{A}(\lambda ) \le c_a&\left[\left(1+ \frac{c_L}{2\theta_0\lambda n}+\frac{Dps^{\frac{p+2}{2}}}{\sqrt{n}}\right) \left(\frac{L_{R}R+M_{R}\log{\frac{2}{\eta}}}{\sqrt{n}{s^{p+2}}}+{s^2}+{\lambda }
\right) \right. \nonumber \\
&\left. + \frac{c_L}{2\theta_0\lambda n} +   \frac{(c_L+1)Dp}{ s^{\frac{p+2}{2}}\sqrt{n} }\right].
\end{align}
By Theorem 15 in \citet{mukherjee2006estimation}, if $(f^*, \nabla f^*) \in \calH_K^{p+1}$ and $\hat{f} \in \calF_{r_2}$ and $\hat{\bg}) \in \calF_{r_3}$, we have

\begin{align*}
	\max \left\{\|\hat{f}-f^*\|_{L_{\mathrm{\rho}_{\mathcal{X}}}^{2}}^{2},\|\hat{\bg}-\nabla f^*\|_{L_{\mathrm{\rho}_{\mathcal{X}}}^{2}}^{2}\right\} \le \tilde{C}\left\{R^{2} s^{\tau}+Rs^{-\tau}\left(\calE(\hat{f},\hat{\bg})-R_s\right)\right\},
\end{align*}
for some $\tilde{C}>0$ and $R>1$. 
Therefore for some $\tilde{C}>0$ and $R>1$, we have
\begin{align} \label{thm1form}
	\max \left\{\|\hat{f}-f^*\|_{L_{\mathrm{\rho}_{\mathcal{X}}}^{2}}^{2},\|\hat{\bg}-\nabla f^*\|_{L_{\mathrm{\rho}_{\mathcal{X}}}^{2}}^{2}\right\} \le \tilde{C}\left\{R^{2} s^{\tau}+Rs^{-\tau}\left(\mathscr{E}(s,\lambda )\right)\right\}.
\end{align}
Now we verify $R>1$. Since $\calE(\hat{f},\hat{\bg})-\calR_{s}>0$,
\begin{align*}
\lambda  J_0(f)+ \lambda  J_1(\bg)&\le\calE(\hat{f},\hat{\bg})-\calR_{s} +  	\lambda  J_0(f)+ \lambda  J_1(\bg)\\
&\le  \mathscr{S}_1 +\left[1+\lambda c_L  \left( \frac{1}{2\theta_0\zeta^2n}  + \frac{Dp\epsilon}{\zeta\sqrt{n}}\right)\right]\mathscr{S}_2
+\mathscr{A}(\lambda )
\end{align*}
Furthermore, 
\begin{align} \label{lambdaeq}
	\lambda  J_0(f)+ \lambda  J_1(\bg)&\le\hat{\calE}(\hat{f},\hat{\bg})+  	\lambda  J_0(f)+ \lambda  J_1(\bg) \nonumber\\ 
       &\le\hat{\calE}(0,\mathbf{0})+  	\lambda  J_0(0)+ \lambda  J_1(\mathbf{0}) \nonumber\\
        &\le\frac{L(0)}{s^{p+2}}.
\end{align}
So $r_2^2+ r_3 \le\frac{L(0)}{\lambda s^{p+2}}$ and similarly we can show that $r_1^2 \le\frac{L(0)}{\lambda s^{p+2}}$. That is, $R^2\le\frac{L(0)}{\lambda s^{p+2}}$. 
Plug $R^2\le\frac{L(0)}{\lambda s^{p+2}}$ into \eqref{s1s2}, \eqref{alambda}, and \eqref{lambdaeq} yields that
\begin{align*}
R^2 &\le (c_{s_1}+c_{s_2}) \left(\frac{L_R}{\sqrt{\lambda s^{p+2}}} +M_R\log\frac{2}{\eta} \right)\frac{1}{\sqrt{n}\lambda s^{p+2}}\\
    &\quad +c_a\left[\left(1+ \frac{c_L+\sqrt{c_L}Dp}{ ns^{\frac{p+2}{2}}}\right)\left(\left(\frac{L_R}{\sqrt{\lambda s^{p+2}}} +M_R\log\frac{2}{\eta} \right)\frac{1}{\sqrt{n}\lambda s^{p+2}}+\frac{s^2}{\lambda }+1
    \right) \right.\\
    &\quad \left. + \frac{c_L}{2\theta_0\lambda n} +   \frac{(c_L+1)Dp}{ s^{\frac{p+2}{2}}\sqrt{n} }\right]\\
    &\le c_R\left[\left(1+ \frac{c_L+\sqrt{c_L}Dp}{ ns^{\frac{p+2}{2}}}\right)\left(\left(\frac{L_R}{\sqrt{\lambda s^{p+2}}} +M_R\log\frac{2}{\eta} \right)\frac{1}{\sqrt{n}\lambda s^{p+2}}+\frac{s^2}{\lambda }+1
    \right) \right. \\
    &\quad \left. + \frac{c_L}{2\theta_0\lambda n} +   \frac{(c_L+1)Dp}{ s^{\frac{p+2}{2}}\sqrt{n} }\right]
\end{align*}
for $c_R=\max\left\{c_{s_1}+c_{s_2},c_a\right\}$. Substituting the above $R$ into \eqref{thm1form} gives us the desired bound with the confidence at least $1-2\eta$. 

%
\end{proof}
\subsection{Proof of Theorem  \ref{thm2}}
\begin{proof}[Proof of theorem \ref{thm2}]
	First, we show that $\|\hat{\balpha}_{\ell}\|_2 =0$ for any $\ell >p_0$ by contradiction. Suppose that $\|\hat{\balpha}_{\ell}\|_2>0$ for some $\ell>p_0$. Taking the first derivatives of \eqref{eq:gradient} with respect to $\balpha_{\ell}$ yields that
	\begin{align}\label{eq1'}
		\frac{1}{n^2}	\sumi \sumj  \omega_{s}(\bx_{i}-\bx_{j})L' \{y_i (\hat{f}(\bx_j) + \hat{\bg}(\bx_j)^\T (\bx_i - \bx_j))\}(x_{i\ell}-x_{j\ell})\bk_{j}= -\frac{\lambda \theta_{\ell}\hat{\balpha_{\ell}}}{\|\hat{\balpha_{\ell}}\|_2}.
	\end{align}		
	Note that the norm of the right-hand side multiplied by 
	$s^{p+2}$ is 
	$s^{p+2}\lambda \theta_{\ell}$, which diverges to $\infty$ by Assumption A5. The contradiction can then be concluded by showing that the norm on the left-hand side of \eqref{eq1'} is less than 	$\mathcal{O}(n^\frac{1}{q}).$ Since each element of $\bk_j$ is bounded by Assumption A4, a slight modification of the proof of Lemma \ref{lem_pen} yields that 
	\begin{align*}
		\left\|\frac{1}{n^2}	\sumi \sumj  \omega_{s}(\bx_{i}-\bx_{j})L' \{y_i (\hat{f}(\bx_j) + \hat{\bg}(\bx_j)^\T (\bx_i - \bx_j))\}(x_{i\ell}-x_{j\ell})\bk_{j}\right\|_2\\
		\qquad \le \sqrt{c_{L}}D \sqrt{\hat{\calE}(\hat{f},\hat{\bg})} .
	\end{align*}
Similar argument of \eqref{lambdaeq} yields that $\calE_{'}(\hat{f},\hat{\bg})\le \frac{L(0)}{s^{p+2}}$.
Therefore the left-hand side of \eqref{eq1'}  is less bounded by 
$s^{\frac{p+2}{2}}\sqrt{c_L L(0)}D$, which converges to 0 as $s \to 0 $ and contradicts with the fact that $s^{p+2}  \lambda \theta_{\ell}$ diverges to $\infty$. Therefore, we have  $\|\hat{\balpha_{\ell}}\|_2=0$ for any $\ell>p_0$. 
Next, we show that $\|\hat{\balpha_{\ell}}\|_2>0$ for any $\ell\le p_0$. Define
$$
\mathcal{X}_s=\left\{\bx\in\bX:d(\bx,\partial\bX)>s \text{ and } \rho(\bx)\ge (1+c_\rho)s^\tau	\right\}.
$$
Same as the proof of Theorem 15 in \citet{mukherjee2006estimation}, for some given constant $c_g>0$, we have
$$
\int_{\mathcal{X}_s} \|\hat{\bg}(\bx)-\nabla f^*(\bx)\|d\rho_{\mathcal{X}}(\bx)\le c_gs^{-\tau}\left(r^2s^2+R\left[\calE(\hat{f},\hat{\bg})-R_s\right]\right).
$$
By Theorem \ref{thm1}, we have
$$
\int_{\mathcal{X}_s} \|\hat{\bg}(\bx)-\nabla f^*(\bx)\|d\rho_{\mathcal{X}}(\bx) \to 0
$$
as $n \to 0$. 
Now suppose $\|\hat{\balpha}_{\ell}\|_2=0$ for some $\ell<p_0$, which implies
$$
\int_{\mathcal{X}_s} \|\hat{\bg}(\bx)-\nabla f^*(\bx)\|d\Rho(\bx)=\int_{\mathcal{X}_s} \left\|\frac{\partial f^*(\bx)}{\partial x_\ell}\right\|_2^2 d\Rho(\bx).
$$
As $s \to 0$, 
$$
\int_{\mathcal{X}_s}\left\|\frac{\partial f^*(\bx)}{\partial x_\ell}\right\|_2^2 d\Rho(\bx)\ge 
\int_{\mathcal{X} \backslash \mathcal{X}_s} \left\|\frac{\partial f^*(\bx)}{\partial x_\ell}\right\|_2^2 d\Rho(\bx),
$$
which is a positive constant by Assumption A6 and then leads to the contradiction. Combining the above two statements implies the desired variable selection consistency. 
\end{proof}


\end{document}